\newtheorem{assumption}{Assumption}[section]
\newtheorem{definition}{Definition}[section]
\newtheorem{theorem}{Theorem}[section]
\newtheorem{lemma}{Lemma}[section]
\newtheorem{problem}{Problem}
\newtheorem*{example}{Example}
\newtheorem*{remark}{Remark}
\newcommand{\tabincell}[2]{\begin{tabular}{@{}#1@{}}#2\end{tabular}}
\title{Kernelized Heterogeneous Risk Minimization}
\author{%
Jiashuo~Liu\textsuperscript{\rm  1,}\thanks{Equal Contributions}\ , Zheyuan~Hu\textsuperscript{\rm 2,*}, Peng~Cui\textsuperscript{\rm 1,}\thanks{Corresponding Author}\ , Bo~Li\textsuperscript{\rm 3} and Zheyan~Shen\textsuperscript{\rm 1} \\
  \textsuperscript{\rm 1} Department of Computer Science \& Technology, Tsinghua University, Beijing, China\\
  \textsuperscript{\rm 2}Department of Computer Science, National University of Singapore, Singapore\\
  \textsuperscript{\rm 3}School of Economics and Management, Tsinghua University, Beijing, China\\
  {\footnotesize\{liujiashuo77,zyhu2001\}@gmail.com, cuip@tsinghua.edu.cn,}\\
   {\footnotesize libo@sem.tsinghua.edu.cn, shenzy17@mails.tsinghua.edu.cn}
}
\begin{document}

\maketitle

\begin{abstract}
  The ability to generalize under distributional shifts is essential to reliable machine learning, while models optimized with empirical risk minimization usually fail on non-$i.i.d$ testing data.
  Recently, invariant learning methods for out-of-distribution (OOD) generalization propose to find causally invariant relationships with multi-environments.
  However, modern datasets are frequently multi-sourced without explicit source labels, rendering many invariant learning methods inapplicable. 
  In this paper, we propose Kernelized Heterogeneous Risk Minimization (KerHRM) algorithm, which achieves both the latent heterogeneity exploration and invariant learning in kernel space, and then gives feedback to the original neural network by appointing invariant gradient direction.
  We theoretically justify our algorithm and empirically validate the effectiveness of our algorithm with extensive experiments.
\end{abstract}

\section{Introduction}
Traditional machine learning algorithms which optimize the empirical risk often suffer from poor generalization performance under distributional shifts caused by latent heterogeneity or selection biases that widely exist in real-world data\cite{da, databias}.
How to guarantee a machine learning algorithm with good generalization ability on data drawn out-of-distribution is of paramount significance, especially in high-stake applications such as financial analysis, criminal justice and medical diagnosis, etc.\cite{medical,criminal}, which is known as the out-of-distribution(OOD) generalization problem\cite{irm}.

To ensure the OOD generalization ability, invariant learning methods assume the existence of the causally invariant correlations and exploit them through given environments, which makes their performances heavily dependent on the quality of environments.
Further, the requirements for the environment labels are too strict to meet with, since real-world datasets are frequently assembled by merging data from multiple sources without explicit source labels.
Recently, several works\cite{creager2020environment, hrm} to relax such restrictions have been proposed.
Creager \textit{et al.}\cite{creager2020environment} directly infer the environments according to a given biased model first and then performs invariant learning.
But the two stages cannot be jointly optimized and the quality of inferred environments depends heavily on the pre-provided biased model.
Further, for complicated data, using invariant representation for environment inference is harmful, since the environment-specific features are gradually discarded, causing the extinction of latent heterogeneity and rendering data from different latent environments undistinguishable.
Liu \textit{et al.}\cite{hrm} design a mechanism where two interactive modules for environment inference and invariant learning respectively can promote each other.
However, it can only deal with scenarios where invariant and variant features are decomposed on raw feature level, and will break down when the decomposition can only be performed in representation space(e.g., image data).

This paper focuses on the integration of latent heterogeneity exploration and invariant learning on representation level.
In order to incorporate representation learning with theoretical guarantees, we introduce Neural Tangent Kernel(NTK\cite{ntk}) into our algorithm.
According to NTK theory\cite{ntk}, training the neural network is equivalent to linear regression using Neural Tangent Features(NTF), which converts non-linear neural networks into linear regression in NTF space and makes the integration possible. 
Based on this, our Kernelized Heterogeneous Risk Minimization (KerHRM) algorithm is proposed, which synchronously optimizes the latent heterogeneity exploration module $\mathcal{M}_c$ and invariance learning module $\mathcal{M}_p$ in NTF space.
Specifically, we propose our novel Invariant Gradient Descent(IGD) for $\mathcal{M}_p$, which performs invariant learning in NTF space and then feeds back to neural networks with appointed invariant gradient direction.
For $\mathcal{M}_c$, we construct an orthogonal heterogeneity-aware kernel to capture the environment-specific features and to further accelerate the heterogeneity exploration.
Theoretically, we demonstrate our heterogeneity exploration algorithm for $\mathcal{M}_c$ with rate-distortion theory and justify the orthogonality property of the built kernel, which jointly can illustrate the mutual promotion between the two modules.
Empirically, experiments on both synthetic and real-world data validate the superiority of KerHRM in terms of good out-of-distribution generalization performance.

\section{Preliminaries}
%
%
%
\label{sec:OOD}
Following \cite{irm, invariant-rationalization}, we consider data $D=\{D^e\}_{e\in\mathrm{supp}(\mathcal{E}_{tr})}$ with different sources data $D^e=\{X^e, Y^e\}$ collected from multiple training environments $\mathcal{E}_{tr}$.
Here environment labels are unavailable as in most of the real applications.
$\mathcal{E}_{tr}$ is a random variable on indices of training environments and $P^e$ is the distribution of data and label in environment $e$.
The goal of this work is to find a predictor $f(\cdot):\mathcal{X}\rightarrow \mathcal{Y}$ with good out-of-distribution generalization performance, which is formalized as:
\begin{small}
\begin{equation}
\label{equ:ood}
	\arg\min_f \max_{e\in\mathrm{supp}(\mathcal{E})} \mathcal{L}(f|e)
\end{equation}	
\end{small}
where $\mathcal{L}(f|e)=\mathbb{E}^e[\ell(X^e,Y^e)]$ represents the risk of predictor $f$ on environment $e$, and $\ell(\cdot,\cdot):\mathcal{Y}\times\mathcal{Y}\rightarrow \mathbb{R}^+$ the loss function.
Note that $\mathcal{E}$ is the random variable on indices of all possible environments such that $\mathrm{supp}(\mathcal{E}_{tr}) \subset \mathrm{supp}(\mathcal{E})$.
Usually, for all $e\in \mathrm{supp}(\mathcal{E})\setminus \mathrm{supp}(\mathcal{E}_{tr})$, the data and label distribution $P^e(X,Y)$ can be quite different from that of training environments $\mathcal{E}_{tr}$. 
Therefore, the problem in equation \ref{equ:ood} is referred to as Out-of-Distribution (OOD) Generalization problem \cite{irm}. 
Since it is impossible to characterize the latent environments $\mathcal{E}$ without any prior knowledge or structural assumptions, the invariance assumption is proposed for invariant learning:
\begin{assumption}
	\label{assumption1: main}
	There exists random variable $\Psi_S^*(X)$ such that the following properties hold: \\
	\small a. $\mathrm{Invariance\ property}$:\quad for all $e, e' \in \mathrm{supp}(\mathcal{E})$, we have  $P^e(Y|\Psi_S^*(X)) = P^{e'}(Y|\Psi_S^*(X))$ holds.\\
	\small b. $\mathrm{Sufficiency\ property}$: $Y = f(\Psi_S^*) + \epsilon,\ \epsilon \perp X$. 
\end{assumption}
This assumption indicates invariance and sufficiency for predicting the target $Y$ using $\Psi_S^*$, which is known as invariant representations with stable relationships with $Y$ across $\mathcal{E}$.
To acquire such $\Psi^*_S$, a branch of works\cite{rationalization, mip, hrm} proposes to find the maximal invariant predictor(MIP) of an invariance set, which are defined as follows:
\begin{definition}
	\label{definition2:invariance set}
	The \textbf{invariance set $\mathcal{I}$} with respect to $\mathcal{E}$ is defined as:
	\begin{small}
	\begin{equation}
	    \begin{aligned}
	    \mathcal{I}_{\mathcal{E}} &= \{\Psi_S(X): Y \perp \mathcal{E}|\Psi_S(X)\}= \{\Psi_S(X): H[Y|\Psi_S(X)]=H[Y|\Psi_S(X),\mathcal{E}]\}
	    \end{aligned}
	\end{equation}
	\end{small}
	where $H[\cdot]$ is the Shannon entropy of a random variable. 
	The corresponding \textbf{maximal invariant predictor (MIP)} of $\mathcal{I}_{\mathcal{E}}$ is defined as $S = \arg\max_{\Phi \in \mathcal{I}_{\mathcal{E}}}\mathbb{I}(Y;\Phi)$, where $\mathbb{I}(\cdot;\cdot)$ measures Shannon mutual information between two random variables.
\end{definition}

Firstly, we propose that using the maximal invariant predictor $S$ of $\mathcal{I}_{\mathcal{E}}$ can guarantee OOD optimality in Theorem \ref{theorem:OOD opt}.
The formal statement is similar to \cite{hrm} and can be found in Appendix A.3.
\begin{theorem}(Optimality Guarantee, informal)
	\label{theorem:OOD opt}
	For predictor $\Phi^*(X)$ satisfying Assumption \ref{assumption1: main}, $\Psi_S^*$ is the maximal invariant predictor with respect to $\mathcal{E}$ and the solution to OOD problem in equation \ref{equ:ood} is $\mathbb{E}_Y[Y|\Psi_S^*] = \arg\min_f \sup_{e\in\mathrm{supp}(\mathcal{E})}\mathbb{E}[\mathcal{L}(f)|e]$.
\end{theorem}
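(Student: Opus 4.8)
The plan is to establish the two assertions of the statement separately: first that the variable $\Psi_S^*$ furnished by Assumption~\ref{assumption1: main} is (a version of) the maximal invariant predictor $S$ of $\mathcal{I}_{\mathcal{E}}$ from Definition~\ref{definition2:invariance set}, and second that the conditional-mean predictor it induces is a minimizer of the worst-case risk in equation~\ref{equ:ood} under the squared loss. Throughout I would fix $\ell$ to be squared error and, absorbing any constant into $f$, assume $\mathbb{E}[\epsilon]=0$ with $\sigma^2:=\mathbb{E}[\epsilon^2]$.

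\emph{Membership and maximality.} For the first part, the invariance property (Assumption~\ref{assumption1: main}a) states that $P^e(Y\mid\Psi_S^*(X))$ is the same for every $e\in\mathrm{supp}(\mathcal{E})$, which is precisely $Y\perp\mathcal{E}\mid\Psi_S^*(X)$ and hence the entropy identity $H[Y\mid\Psi_S^*(X)]=H[Y\mid\Psi_S^*(X),\mathcal{E}]$, so $\Psi_S^*\in\mathcal{I}_{\mathcal{E}}$. For maximality, note each $\Phi\in\mathcal{I}_{\mathcal{E}}$ is of the form $\Psi_S(X)$, a measurable function of $X$, so the data-processing inequality gives $\mathbb{I}(Y;\Phi)\le\mathbb{I}(Y;X)$; meanwhile the sufficiency property (Assumption~\ref{assumption1: main}b), $Y=f(\Psi_S^*)+\epsilon$ with $\epsilon\perp X$, forces $Y\perp X\mid\Psi_S^*$ --- conditionally on $\Psi_S^*(X)$ the only randomness left in $Y$ is $\epsilon$, which is independent of $X$ --- and therefore $H[Y\mid X]=H[Y\mid\Psi_S^*]$, i.e.\ $\mathbb{I}(Y;\Psi_S^*)=\mathbb{I}(Y;X)$. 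Combining the two bounds shows $\Psi_S^*$ attains $\max_{\Phi\in\mathcal{I}_{\mathcal{E}}}\mathbb{I}(Y;\Phi)$, so it is the MIP $S$.

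\emph{Optimality for the OOD objective.} Let $g^*(\cdot):=\mathbb{E}_Y[Y\mid\Psi_S^*=\cdot]$; by invariance this map does not depend on the environment, and by sufficiency (with zero-mean noise) it equals the structural map $f$. For an arbitrary predictor $h$ and any $e$, write $Y=g^*(\Psi_S^*(X))+\epsilon$ and expand the risk:
\begin{equation*}
\mathcal{L}(h\mid e)=\mathbb{E}^e\big[(h(X)-g^*(\Psi_S^*(X)))^2\big]-2\,\mathbb{E}^e\big[(h(X)-g^*(\Psi_S^*(X)))\,\epsilon\big]+\mathbb{E}^e[\epsilon^2].
\end{equation*}
The cross term is zero because $h(X)-g^*(\Psi_S^*(X))$ is a function of $X$ and $\epsilon\perp X$ with $\mathbb{E}[\epsilon]=0$, so $\mathcal{L}(h\mid e)\ge\sigma^2$ for every $e$, and thus $\sup_{e}\mathcal{L}(h\mid e)\ge\sigma^2$. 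Taking $h=g^*\circ\Psi_S^*$ kills the first term in every environment, giving $\sup_{e}\mathcal{L}(g^*\circ\Psi_S^*\mid e)=\sigma^2$; hence $\mathbb{E}_Y[Y\mid\Psi_S^*]$ attains the minimax value and solves equation~\ref{equ:ood}.

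\emph{Where the difficulty lies.} The algebra is routine once the conventions are pinned down; the genuinely load-bearing points are (i) fixing the loss and the $\mathbb{E}[\epsilon]=0$ convention precisely so the cross term vanishes, (ii) checking that membership in $\mathcal{I}_{\mathcal{E}}$ really does make $\Phi$ a deterministic function of $X$, which is what licenses the data-processing inequality, and (iii) the identity $\mathbb{I}(Y;\Psi_S^*)=\mathbb{I}(Y;X)$, which is exactly where the sufficiency half of Assumption~\ref{assumption1: main} is essential --- without it a trivially invariant but uninformative $\Phi$ (say a constant) would also sit in $\mathcal{I}_{\mathcal{E}}$ and the maximal-information characterization would pick the wrong predictor. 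A formal version would additionally require mild integrability/regularity assumptions on $f$, $h$ and the $P^e$ for the expectations and entropies to be well defined.
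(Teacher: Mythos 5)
Your first half (showing $\Psi_S^*$ is the MIP) is fine, and in fact takes a cleaner route than the paper: you use the data-processing inequality together with $\mathbb{I}(Y;\Psi_S^*)=\mathbb{I}(Y;X)$, whereas the paper's appendix argues via the functional representation lemma, writing any competitor $\Phi'$ as $\sigma(\Psi_S^*,\Phi_{extra})$ and showing the extra part adds no information about $Y$. Either argument is acceptable for this part.

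The second half has a genuine gap. Your cross-term cancellation requires that $Y=g^*(\Psi_S^*(X))+\epsilon$ with $\epsilon\perp X$ (and the same noise law) holds under \emph{every} $P^e$, $e\in\mathrm{supp}(\mathcal{E})$. Under that reading, every predictor has risk at least $\sigma^2$ in every single environment and $h^*=\mathbb{E}[Y|\Psi_S^*]$ attains $\sigma^2$ in every environment, so your argument actually proves per-environment Bayes optimality of $h^*$ — which trivializes the minimax statement and would equally certify ERM as worst-case optimal, contradicting the premise of the paper (spurious features $\Psi_V^*$ are informative within environments, e.g.\ under selection bias or the anti-causal example, precisely because the residual $Y-f(\Psi_S^*)$ is \emph{not} independent of $X$ under those $P^e$). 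The load-bearing content of the theorem is exactly what your reading assumes away: the paper's proof does not use $\epsilon\perp X$ environment-wise. Instead it (i) adds an explicit heterogeneity assumption (Appendix Assumption A.2: $P^e(Y|\Psi_V^*)$ can change arbitrarily over $\mathrm{supp}(\mathcal{E})$), (ii) for an arbitrary competitor $h$ and any environment $e$ with density $P(\Psi_S,\Psi_V,Y)$ constructs an adversarial environment $e'$ with density $Q(\Psi_S,\Psi_V,Y)=P(\Psi_S,Y)Q(\Psi_V)$, i.e.\ decouples the variant features, and (iii) applies the Bregman-loss fact that the conditional mean $\mathbb{E}[Y|\Psi_S^*]$ (well defined across environments only thanks to the invariance property) minimizes the inner risk for each fixed $\psi_v$, yielding $\mathbb{E}[D(h(X),Y)|e']\geq\mathbb{E}[D(h^*(X),Y)|e]$ and hence $\sup_e\mathcal{L}(h|e)\geq\sup_e\mathcal{L}(h^*|e)$ without ever claiming $h^*$ dominates within each environment. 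This also covers general Bregman losses rather than only squared error, and it does not assume the risk of $h^*$ is constant across environments (the marginal of $\Psi_S^*$ may shift). To repair your proof you would need to import the heterogeneity assumption and the adversarial-environment construction; the squared-loss bias--variance expansion alone cannot deliver the theorem in the setting the paper intends.
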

However, recent works\cite{rationalization,mip} on finding MIP solutions rely on the availability of data from multiple training environments $\mathcal{E}_{tr}$, which is hard to meet with in practice.
Further, their validity is highly determined by the given $\mathcal{E}_{tr}$.
Since $\mathcal{I}_{\mathcal{E}}\subseteq  \mathcal{I}_{\mathcal{E}_{tr}}$, the invariance regularized by $\mathcal{E}_{tr}$ is often too large and the learned MIP may contain variant components and fails to generalize well.
Based on this, Heterogeneous Risk Minimization(HRM\cite{hrm}) proposes to generate environments $\mathcal{E}_{tr}$ with minimal $|\mathcal{I}_{\mathcal{E}_{tr}}|$ and to conduct invariant prediction with learned $\mathcal{E}_{tr}$.
However, the proposed HRM can only deal with simple scenarios where $X=[\Psi_S^*,\Psi^*_V]^T$ on raw feature level ($\Psi_S^*$ are invariant features and $\Psi_V^*$ variant ones), and will break down where $X=h(\Psi^*_S,\Psi_V^*)$ ($h(\cdot,\cdot)$ is an unknown transformation function), since the decomposition can only be performed in representation space. 
In this work, we focus on the integration of latent heterogeneity exploration and invariant learning in general scenarios where invariant features are latent in $X$, which can be easily fulfilled in real applications.
\begin{problem}(Problem Setting)\quad
\label{problem1}
	Assume that $X=h(\Psi_S^*,\Psi_V^*) \in \mathbb{R}^d$, where $\Psi_S^*$ satisfies Assumption \ref{assumption1: main}, $h(\cdot)$ is an unknown transformation function and $\Psi_S^* \perp \Psi_V^*$ (following functional representation lemma\cite{networkinformation}), given heterogeneous dataset $D = \{D^e\}_{e\in\mathrm{supp}(\mathcal{E}_{latent})}$ without environment labels, \textbf{the task} is to generate environments $\mathcal{E}_{learn}$ with minimal $|\mathcal{I}_{\mathcal{E}_{learn}}|$ and meanwhile learn invariant models.
\end{problem}

\section{Method}
\begin{small}
\begin{algorithm}[b]
	\caption{Kernelized Heterogeneous Risk Minimization (KerHRM) Algorithm}
	\label{algo:kernelHRM}
	\begin{algorithmic}
	\STATE \small {\bfseries Input:} Heterogeneous dataset $D=\{D^e=(X^e,Y^e)\}_{e\in\mathcal{E}_{tr}}$
	\STATE \small {\bfseries Initialization:} MLP model $f_w(\cdot)$ with initialized $w_0$, Neural Tangent Feature $\Phi(X)=\nabla_w f_{w_0}(X)$(fixed in following), clustering kernel initialized as $\kappa_c^{(0)}(x_1,x_2)=x_1^Tx_2$
	\FOR{ $t=1$ {\bfseries to} $T$}
		\STATE \small 1. Generate $\mathcal{E}_{learn}^{(t)}$ with clustering kernel $\kappa_c^{(t-1)}$: $\mathcal{E}_{learn}^{(t)} = \mathcal{M}_c((\Phi(X),Y), \kappa_c^{(t-1)})$
		\STATE \small 2. Learn invariant model parameters $\theta_{inv}^{(t)}$ with $\mathcal{E}_{learn}^{(t)}$ in NTF space: $\theta_{inv}^{(t)}=\mathcal{M}_p(\mathcal{E}_{learn}^{(t)})$
		\STATE \small 3. Feedback to Neural Network $f_w(\cdot)$ with $\theta_{inv}^{(t)}$: $w_{inv}^{(t)}=\arg\min_w \mathcal{L}(w;X,Y)+\mathrm{Reg}(w, \theta_{inv}^{(t)})$
		\STATE \small 4. Update the clustering kernel $\kappa_c^{(t)}$ with $\theta_{inv}^{(t)}$: $\kappa_c^{(t)} \leftarrow \mathrm{Orthogonal\ Transform}(\kappa_c^{(t-1)}, \theta_{inv}^{(t)})$
	\ENDFOR	
	\end{algorithmic}
\end{algorithm}
\end{small}
\begin{figure}
	\includegraphics[width=\textwidth]{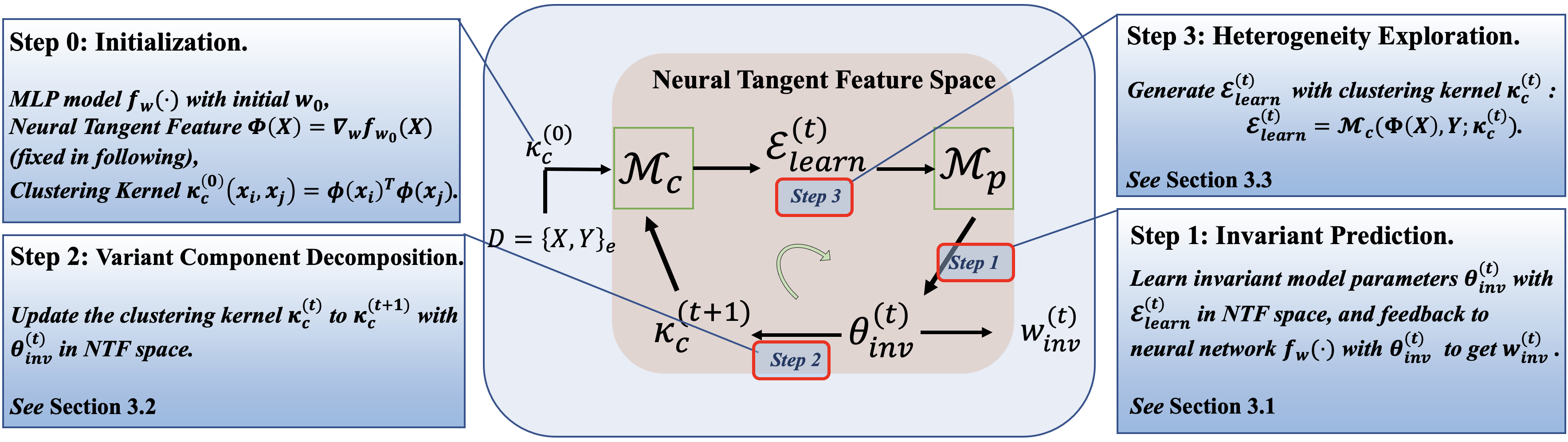}
	\vskip -0.1in
	\caption{The framework for KerHRM. The middle block diagram shows the overall flow of the algorithm, which consists of two modules named heterogeneity exploration module $\mathcal{M}_c$ and invariant prediction module $\mathcal{M}_p$. The whole algorithm runs iteratively between $\mathcal{M}_c$ and $\mathcal{M}_p$, where one iteration consists of three steps, which we illustrate in section \ref{sec:IGD}, \ref{sec:orthogonal} and \ref{sec:cluster} respectively.}
	\label{fig:algorithm}
\end{figure}

\vskip -0.1in
\begin{remark}
	Following the analysis in section \ref{sec:OOD}, to generate environments $\mathcal{E}_{learn}$ with minimal $|\mathcal{I}_{\mathcal{E}_{learn}}|$ is equivalent to generate environments with as varying $P(Y|\Psi_V^*)$ as possible, so as to exclude variant parts $\Psi_V^*$ from the invariant set $\mathcal{I}_{\mathcal{E}_{learn}}$. 
\end{remark}
In spite of such insight, the latent $\Psi_S^*, \Psi_V^*$ make it impossible to directly generate $\mathcal{E}_{learn}$. 
In this work, we propose our Kernelized Heterogeneous Risk Minimization (KerHRM) algorithm with two interactive modules, the frontend $\mathcal{M}_c$ for heterogeneity exploration and backend $\mathcal{M}_p$ for invariant prediction.
Specifically, given pooled data, the algorithm starts with the heterogeneity exploration module $\mathcal{M}_c$ with a learned heterogeneity-aware kernel $\kappa_c$ to generate $\mathcal{E}_{learn}$.
The learned environments are used by $\mathcal{M}_p$ to produce invariant direction $\theta_{inv}$ in Neural Tangent Feature(NTF) space that captures the invariant components $\Psi_S$, and then $\theta_{inv}$ is used to guide the gradient descent of neural networks.
After that, we update the kernel $\kappa_c$ to orthogonalize with the invariant direction $\theta_{inv}$ so as to better capture the variant components $\Psi_V$ and realize the mutual promotion between $\mathcal{M}_c$ and $\mathcal{M}_p$ iteratively.
The whole framework is jointly optimized, so that the mutual promotion between heterogeneity exploration and invariant learning can be fully leveraged.  
For smoothness we begin with the invariant prediction step to illustrate our algorithm, and the flow of whole algorithm is shown in figure \ref{fig:algorithm}.

\subsection{$\mathcal{M}_p$: Invariant Gradient Descent with $\mathcal{E}_{learn}$ (Step 1)}
\label{sec:IGD}
For our invariant learning module $\mathcal{M}_p$, we propose Invariant Gradient Descent (IGD) algorithm.
Taking the learned environments $\mathcal{E}_{learn}$ as input, our IGD firstly performs invariant learning in Neural Tangent Feature (NTF\cite{ntk}) space to obtain the invariant direction $\theta_{inv}$, and then guides the whole neural network $f_w(\cdot)$ with $\theta_{inv}$ to learn the invariant model(neural network)'s parameters $w_{inv}$.

\textbf{Neural Tangent Feature Space}\quad
The NTK theory\cite{ntk} shows that training the neural network is equivalent to linear regression using non-linear NTFs $\phi(x)$, as in equation \ref{equ:ntk}.
For each data point $x \in \mathbb{R}^d$, where $d$ is the feature dimension, the corresponding feature is given by $\phi(x) = \nabla_w f_w(x) \in \mathbb{R}^p$, where $p$ is the number of neural network's parameters. 
Firstly, we would like to dissect the feature components within $\phi(x)$ by decomposing the invariant and variant components hidden in $\phi(x)$.
Therefore, we propose to perform Singular Value Decomposition (SVD) on the NTF matrix:
\begin{scriptsize}
\begin{equation}
\label{equ:SVD}
\underbrace{\Phi(X)^T}_{\mathbb{R}^{n\times p}}	 \approx \underbrace{U}_{\mathbb{R}^{n\times k}} \cdot \underbrace{S}_{\mathbb{R}^{k\times k}} \cdot \underbrace{V^T}_{\mathbb{R}^{k \times p}}	\quad\quad\quad \normalsize\text{where}\   p \gg n \geq k
\end{equation}
\end{scriptsize}
Intuitively, in equation \ref{equ:SVD}, each row $V^T_{j,:}$ of $V^T$ represents the $j$-th feature component of $\mathbb{R}^p$ and we take $k$ such different feature components with the top $k$ largest singular values to represent the data, and the rationality of low rank decomposition is guaranteed theoretically\cite{low-rank, svd2} and empirically\cite{svd1}.
Since SVD ensures every feature components orthogonal, the neural tangent feature of the $i$-th data point can be decomposed into $\phi(x_i)^T \approx \sum_{j=1}^kU_{i,j}\cdot S_{j,j} \cdot V_{j,:}^T$, where $U_{i,j}\cdot S_{j,j}$ denotes the strength of the $j$-th feature component in the $i$-th data.
However, since neural networks have millions of parameters, the high dimension prevents us from learning directly on high dimensional NTFs $\Phi(X)$.
Therefore, we rewrite the initial formulation of linear regression into:
\begin{small}
	\begin{align}
	\label{equ:ntk}
		f_w(X) &\approx f_{w_0}(X) + \Phi(X)^T(w-w_0)\approx f_{w_0}(X) + USV^T(w-w_0)\\
		\label{equ:low-dimensional}
			&= f_{w_0}(X) + \Psi(X)\left(V^T(w-w_0)\right) =  f_{w_0}(X) + \Psi(X)\theta
	\end{align}	
\end{small}
where we let $\theta=V^T(w-w_0) \in \mathbb{R}^k$ which reflects how the model parameter $w$ utilizes the $k$ feature components.
Since $V^T$ is orthogonal, fitting $w-w_0$ with features $\Phi(X)$ is equivalent to fitting $\theta$ using reduced NTFs $\Psi(X)$.
In this way, we convert the original high-dimensional regression problem into the low-dimensional one in equation \ref{equ:low-dimensional}, since in wide neural networks, we have $p \gg n \geq k$.

\textbf{Invariant Learning with Reduced NTFs $\Psi(X)$}\quad We could perform invariant learning on reduced NTFs $\Psi(X)$ in linear space. In this work, we adopt the invariant regularizer proposed in \cite{mip} to learn $\theta=V^T(w-w_0)$ due to its optimality guarantees, and the objective function is:
\begin{small}
	\begin{equation}
	\label{equ:theta_inv}
		\theta_{inv} = \arg\min_\theta \sum_{e\in\mathcal{E}_{learn}}\mathcal{L}^e(\theta;\Psi,Y) + \alpha\cdot\mathrm{Var}_{\mathcal{E}_{learn}}\left(\nabla_\theta \mathcal{L}^e\right)
	\end{equation}
\end{small}
\textbf{Guide Neural Network with invariant direction $\theta_{inv}$}\quad With the learned $\theta_{inv}$, it remains to feed back to the neural network's parameters $w$.
Since for neural networks with millions of parameters whose $p\approx 10^8$, it is difficult to directly obtain $w$ as $w=w_0 + V\theta_{inv}$.
Therefore, we design a loss function to approximate the projection ($w-w_0 // V\theta_{inv}$).
Note that $f_w(X)=f_{w_0}(X)+USV^T(w-w_0)=f_{w_0}(X)+US\theta_{inv}$, we have
\begin{small}
\begin{equation}
	S^{-1}U^T(f_w(X)-f_{w_0}(X))=\theta_{inv}
\end{equation}	
\end{small}
Therefore, we can ensure the updated parameters $w$ satisfy that $S^{-1}U^T(f_w(X)-f_{w_0}(X)) \in \mathbb{R}^k$ is parallel to $\theta_{inv}$, which leads to the following loss function:
\begin{small}
\begin{equation}
	w_{inv} = \arg\min_w \mathcal{L}(w;X,Y) + \lambda\left(1 - \frac{|\left<\theta_{inv}, S^{-1}U^T(f_w(X)-f_{w_0}(X))\right>|}{\|\theta_{inv}\|\|S^{-1}U^T(f_w(X)-f_{w_0}(X))\|}\right)
\end{equation}	
\end{small}
where $\mathcal{L}(w;X,Y)$ is the empirical prediction loss over training data and the second term is to force the invariance property of the neural network.

\subsection{Variant Component Decomposition with $\theta_{inv}$ (Step 2)}
\label{sec:orthogonal}
The core of our KerHRM is the mutual promotion of the heterogeneity exploration module $\mathcal{M}_c$ and the invariant learning module $\mathcal{M}_p$.
From our insight, we should leverage the variant components $\Psi_V$ to exploit the latent heterogeneity. 
Therefore, with the better invariant direction $\theta_{inv}$ learned by $\mathcal{M}_p$ that captures the invariant components in data, it remains to capture better variant components $\Psi_V$ so as to further accelerate the heterogeneity exploration procedure, for which we design a clustering kernel $\kappa_c$ on the reduce NTF space of $\mathbb{R}^k$ with the help of $\theta_{inv}$ learned in section \ref{sec:IGD}.
Recall the NTF decomposition in equation \ref{equ:SVD}, the initial similarity of two data points $x_i$ and $x_j$ can be decomposed as:
\begin{equation}
	\small\kappa_c^{(0)}(x_i, x_j) = \phi(x_i)^T\phi(x_j) = \left<U_{i}S, U_jS\right>
\end{equation}	
With the invariant direction $\theta_{inv}^{(t)}$ learned by $\mathcal{M}_p$ in iteration $t$, we can wipe out the invariant components used by $\theta_{inv}^{(t)}$ via 
\vskip -0.2in
\begin{equation}
\label{equ:transform}
	\small \Psi_V^{(t+1)}(x_i) \leftarrow U_iS -  \left<U_iS,\theta_{inv}^{(t)}\right>\theta_{inv}^{(t)}/\|\theta_{inv}^{(t)}\|^2
\end{equation}
which gives a new heterogeneity-aware kernel that better captures the variant components $\Psi^*_V$ as $\small\kappa_c^{(t+1)}(x_i,x_j) = \Psi_V^{(t+1)}(x_i)^T\Psi_V^{(t+1)}(x_j)$.

\subsection{$\mathcal{M}_c$: Heterogeneity exploration with $\kappa_c$ (Step 3)}
\label{sec:cluster}
$\mathcal{M}_c$ takes one heterogeneous dataset as input, and outputs a learned multi-environment partition $\mathcal{E}_{learn}$ for invariant prediction module $\mathcal{M}_p$, and we implement it as a clustering algorithm with kernel regression given the heterogeneity-aware $\kappa_c(x_i,x_j)=\Psi_V(x_i)^T\Psi_V(x_j)$ that captures the variant components in data.
Following the analysis above, only the variant components $\Psi_V^*$ should be leveraged to identify the latent heterogeneity, and therefore we use the kernel $\kappa_c$ as well as $\Psi_V(X)$ learned in section \ref{sec:orthogonal} to capture the different relationship between $\Psi_V^*$ and $Y$, for which we use $P(Y|\Psi_V)$ as the clustering centre.
Specifically, we assume the $j$-th cluster centre $P_{\Theta_j}(Y|\Psi_V(X))$ to be a Gaussian around $f(\Theta_j;\Psi_V(X))$ as:
\begin{equation}
	\footnotesize h_j(\Psi_V(X), Y) = P_{\Theta_j}(Y|\Psi_V(X)) = (\sqrt{2\pi}\sigma)^{-1} \exp(-(Y-f(\Theta_j;\Psi_V(X)))^2/2\sigma^2 )
\end{equation}
For the given $N=\sum_{e\in\mathrm{supp}(\mathcal{E}_{latent})}|D^e|$ data points $D = \{\psi_V(x_i),y_i\}_{i=1}^N$, the empirical distribution can be modeled as $\hat{P}_N=\frac{1}{N}\sum_{i=1}^N \delta_{\psi_V(x_i),y_i}$. 
Under this setting, we propose one convex clustering algorithm, which aims at finding a mixture distribution in distribution set $\mathcal{Q}$ defined as:
\begin{equation}
	\footnotesize \mathcal{Q} = \{Q = \sum_{j\in [K]} q_j h_j(\Psi_V(X),Y), \bold{q}\in \Delta_K\} 
\end{equation}
to fit the empirical data best.
Therefore, the original objective function and the simplified one are: 
\begin{equation}
	\footnotesize \min_{Q \in \mathcal{Q}}D_{KL}(\hat{P}_N\|Q) \Leftrightarrow \min_{\Theta, \bold{q}} \left\{\mathcal{L}_c = -\frac{1}{N}\sum_{i\in[N]}\log\left[\sum_{j\in[K]}q_j h_j(\psi_V(x_i), y_i)\right]\right\}
\end{equation}
Note that our clustering algorithm differs from others since the cluster centres are learned models parameterized with $\Theta$.
As for optimization, we use EM algorithm to optimize the centre parameters $\Theta$ and the mixture weights $q$ iteratively.
Specifically, when optimizing the cluster centre model $f(\Theta_j;\cdot)$, we use kernel regression with $\kappa_c(\cdot,\cdot)$ to avoid computing $\Psi_V(X)$ and allow large $k$.
For generating the learned environments $\mathcal{E}_{learn}$, we assign $i$-th point to $j$-th cluster with probability $P_{i,j}=q_jh_j(\psi_V(x_i),y_i)/\sum_{l\in[K]}q_lh_l(\psi_V(x_i),y_i)$.

\section{Theoretical Analysis}
In this section, we provide theoretical justifications of the mutual promotion between $\mathcal{M}_c$ and $\mathcal{M}_p$.
Since our algorithm does not violate the theoretical analysis in \cite{mip} and \cite{ntk} which proves that better $\mathcal{E}_{learn}$ from $\mathcal{M}_c$ benefits the MIP learned by $\mathcal{M}_p$, to finish the mutual promotion, we only need to justify that better $\theta_{inv}$ from $\mathcal{M}_p$ benefits the learning of $\mathcal{E}_{learn}$ in $\mathcal{M}_c$.

\textbf{1. Using $\Psi_V*$ benefits the clustering.}\quad
Firstly, we introduce Lemma \ref{lemma:kl} from \cite{hrm} to show that using $\Psi_V^*$ benefits the clustering in terms of larger between-cluster distance.
\begin{lemma}
	\label{lemma:kl}
    For $e_i, e_j \in \mathrm{supp}(\mathcal{E}_{latent})$, assume that $X$ satisfying Assumption \ref{assumption1: main}, then under reasonable assumption(\cite{hrm}), we have $
        D_{\text{KL}}(P^{e_i}(Y|X)\|P^{e_j}(Y|X)) \leq D_{\text{KL}}(P^{e_i}(Y|\Psi_V^*)\|P^{e_j}(Y|\Psi_V^*))$.
\end{lemma}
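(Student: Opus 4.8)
The plan is to decompose the KL divergence between the two full conditional distributions $P^{e_i}(Y|X)$ and $P^{e_j}(Y|X)$ using the structure $X = h(\Psi_S^*, \Psi_V^*)$ with $\Psi_S^* \perp \Psi_V^*$, and then exploit the invariance property (Assumption \ref{assumption1: main}a) to kill the contribution of the invariant part $\Psi_S^*$. The key observation is that since $h$ is (taken to be, or can be arranged to be) invertible on the relevant support, conditioning on $X$ is the same as conditioning on the pair $(\Psi_S^*, \Psi_V^*)$, so $P^e(Y|X) = P^e(Y|\Psi_S^*, \Psi_V^*)$ for each environment $e$. First I would write the chain-rule / tower decomposition for KL: conditioning first on $\Psi_V^*$ and then on $\Psi_S^*$, one gets
\begin{small}
\begin{equation}
D_{\text{KL}}\!\left(P^{e_i}(Y|X)\,\|\,P^{e_j}(Y|X)\right) = D_{\text{KL}}\!\left(P^{e_i}(Y|\Psi_V^*)\,\|\,P^{e_j}(Y|\Psi_V^*)\right) + \mathbb{E}_{\Psi_V^*}\Big[D_{\text{KL}}\!\left(P^{e_i}(Y|\Psi_S^*,\Psi_V^*)\,\|\,P^{e_j}(Y|\Psi_S^*,\Psi_V^*)\right) \text{ corrected by the } \Psi_S^* \text{ marginals}\Big],
\end{equation}
\end{small}
so the result would follow once the bracketed correction term is shown to be nonnegative.

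The main work, and the place where the "reasonable assumption" referenced from \cite{hrm} has to enter, is controlling that second term. The natural route: by the invariance property, $P^{e_i}(Y|\Psi_S^*) = P^{e_j}(Y|\Psi_S^*)$; one would like the analogous statement with $\Psi_V^*$ also in the conditioning set to simplify, or at least one wants the environment to only shift the $\Psi_S^*$-marginal through $\Psi_V^*$ in a controlled way. Concretely, I would assume (this is presumably the "reasonable assumption") that the environment affects $Y$ only through the variant features in the sense $P^e(Y|\Psi_S^*, \Psi_V^*) = P(Y|\Psi_S^*, \Psi_V^*)$ is environment-independent once both are fixed — i.e. all the heterogeneity across environments is carried either by $P^e(\Psi_V^*)$ or by the dependence pattern, with the $Y$-mechanism itself stable. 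Under that assumption the inner KL vanishes and only a Jensen-type cross term involving $\log \frac{P^{e_i}(\Psi_S^* | \Psi_V^*)}{P^{e_j}(\Psi_S^* | \Psi_V^*)}$ survives; using $\Psi_S^* \perp \Psi_V^*$ under each $P^e$ this cross term collapses, and nonnegativity of KL closes the argument. Alternatively, and more robustly, I would apply the data-processing inequality: since $Y$'s dependence on $X$ factors through $(\Psi_S^*,\Psi_V^*)$ and the invariant channel $\Psi_S^* \to Y$ is common to both environments, the extra information in $\Psi_S^*$ can only bring the two conditional laws closer, never farther apart — formally, averaging $D_{\text{KL}}(P^{e_i}(Y|\Psi_S^*,\Psi_V^*)\|P^{e_j}(Y|\Psi_S^*,\Psi_V^*))$ over the \emph{common} conditional distribution of $\Psi_S^*$ given $\Psi_V^*$ and invoking convexity of KL in its arguments yields the monotonicity.

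The step I expect to be the genuine obstacle is making the "reasonable assumption" explicit and checking it is consistent with Assumption \ref{assumption1: main}: the subtlety is that $D_{\text{KL}}(P^{e_i}(Y|X)\|P^{e_j}(Y|X))$ is an average over $P^{e_i}(X)$, whereas $D_{\text{KL}}(P^{e_i}(Y|\Psi_V^*)\|P^{e_j}(Y|\Psi_V^*))$ is an average over $P^{e_i}(\Psi_V^*)$, and one must ensure the change of conditioning variable does not smuggle in a sign-indefinite term via mismatched reference measures. I would handle this by working throughout with the $P^{e_i}$ marginal as the outer expectation, writing the left side as $\mathbb{E}_{P^{e_i}(X)}[\cdots]$, marginalizing the $\log$-ratio over $\Psi_S^*|\Psi_V^*$ inside, and only then applying Jensen; the convexity of $t \mapsto t\log t$ (equivalently the log-sum inequality) gives exactly the inequality in the stated direction. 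Since the lemma is quoted verbatim from \cite{hrm}, I would finally just cite that the detailed verification of the auxiliary assumption and the Jensen step are carried out there, and note that in our setting $h$ being an (unknown but) injective transformation is what guarantees $\sigma(X) = \sigma(\Psi_S^*, \Psi_V^*)$, which is the only structural input the argument needs.
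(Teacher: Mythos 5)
There is a genuine gap, and it sits exactly where you predicted the obstacle would be: identifying the ``reasonable assumption''. The paper's own treatment (Appendix A.2.2) does not give a self-contained argument at all; it states the specific extra assumption imported from HRM, namely the cross-mutual-information condition $I^c_{i,j}(Y;\Phi^*|\Psi^*) \geq \max\bigl(I_i(Y;\Phi^*|\Psi^*),\, I_j(Y;\Phi^*|\Psi^*)\bigr)$, where $I^c_{i,j}(Y;\Phi|\Psi) = H^c_{i,j}[Y|\Psi]-H^c_{i,j}[Y|\Phi,\Psi]$ is built from the cross entropy $H^c_{i,j}[Y]=-\int p^{e_i}(y)\log p^{e_j}(y)\,dy$, and then defers to \cite{hrm}. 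The lemma is essentially \emph{equivalent} to that condition: writing each conditional KL as cross entropy minus entropy (with the outer expectation under $P^{e_i}$, as you correctly insisted), one gets
\begin{equation}
D_{\text{KL}}\bigl(P^{e_i}(Y|\Psi_V^*)\|P^{e_j}(Y|\Psi_V^*)\bigr)-D_{\text{KL}}\bigl(P^{e_i}(Y|X)\|P^{e_j}(Y|X)\bigr)
= I^c_{i,j}(Y;\Psi_S^*|\Psi_V^*)-I_i(Y;\Psi_S^*|\Psi_V^*),
\end{equation}
so the inequality holds iff the cross mutual information dominates the within-$e_i$ mutual information. Your proposal never recovers this quantity; instead you guess that the assumption is environment-independence of $P^e(Y|\Psi_S^*,\Psi_V^*)$, which (with $h$ injective, as you yourself use) forces $P^{e_i}(Y|X)=P^{e_j}(Y|X)$, makes the left-hand side zero, and trivializes the lemma --- that is not the content of the statement nor the assumption actually used.

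The second, more serious problem is the ``more robust'' data-processing/convexity route. Joint convexity of KL (log-sum inequality) applies when both coarse conditionals are mixtures over the \emph{same} mixing law of $\Psi_S^*$ given $\Psi_V^*$, and in that case it yields $D_{\text{KL}}\bigl(P^{e_i}(Y|\Psi_V^*)\|P^{e_j}(Y|\Psi_V^*)\bigr)\leq \mathbb{E}\bigl[D_{\text{KL}}\bigl(P^{e_i}(Y|\Psi_S^*,\Psi_V^*)\|P^{e_j}(Y|\Psi_S^*,\Psi_V^*)\bigr)\bigr]$ --- i.e.\ the \emph{reverse} of the claimed inequality; when the mixing laws differ across environments, no inequality in either direction follows. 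So ``extra information in $\Psi_S^*$ can only bring the two conditional laws closer'' is not a theorem, and the correction term in your chain-rule decomposition is sign-indefinite without the cross-MI assumption. A correct write-up here would simply state Assumption A.3 explicitly, derive the displayed identity, and conclude; Jensen/DPI cannot substitute for the assumption.
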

Then similar to \cite{convexcluster}, we use the rate-distortion theory to demonstrate why larger $D_{KL}$ between cluster centres benefits our convex clustering as well as the quality of $\mathcal{E}_{learn}$.
\begin{theorem}(Rate-Distortion)
	\label{theorem:rate-distortion}
For the proposed convex clustering algorithm, we have:
\begin{equation}
	\label{equ:rate-distortion}
	\small \min_{Q\in\mathcal{Q}}D_{\text{KL}}(\hat{P}_N||Q)=\min_{\Theta} \mathbb{I}(I;J)+(1/2\sigma^2) \mathbb{E}_{I,J}[d(\psi_V(x_i),y_i,\Theta_j)]+\text{Const}
\end{equation}
where $r_{ij}=P(j|\psi_V(x_i),y_i)$ is a discrete random variable over the space $\left\{1,2,...,N\right\} \times \left\{1,2,...,K\right\}$ which denotes the probability of $i$-th data point belonging to $j$-th cluster, $I,J$ are the marginal distribution of random variable $r_{ij}$ respectively, $d(\psi_V(x_i),y_i,\Theta_j)=(f_{\Theta_j}(\psi_V(x_i))-y_i)^2$ and $\mathbb{I}(\cdot;\cdot)$ the Shannon mutual information.
Note that the optimal $r$ can be obtained by the optimal $\Theta$ and therefore we only minimize the r.h.s with respect to $\Theta$.
\end{theorem}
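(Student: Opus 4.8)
The plan is to recognize the proposed convex clustering objective as the classical variational (EM) reformulation of a Gaussian-mixture fit, and then to rearrange the resulting bound into the rate--distortion form claimed in equation \ref{equ:rate-distortion}. First I would start from the simplified objective $\mathcal{L}_c=-\frac{1}{N}\sum_{i\in[N]}\log\big[\sum_{j\in[K]}q_j h_j(\psi_V(x_i),y_i)\big]$, recalling that $D_{\text{KL}}(\hat{P}_N\|Q)$ equals $\mathcal{L}_c$ up to an additive constant that does not depend on $(\Theta,\mathbf{q})$ and can therefore be folded into $\text{Const}$. For each $i$ I introduce an auxiliary distribution $r_{i,\cdot}$ on $[K]$ and use the standard identity $-\log\sum_j q_j h_j=\min_{r_{i,\cdot}}\sum_j r_{ij}\log\frac{r_{ij}}{q_j h_j}$, whose minimizer is the posterior $r_{ij}=q_j h_j/\sum_l q_l h_l=P(j\mid\psi_V(x_i),y_i)$. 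Since this minimization decouples across $i$, we get $\min_{Q\in\mathcal{Q}}D_{\text{KL}}(\hat{P}_N\|Q)=\min_{\Theta,\mathbf{q},r}\frac{1}{N}\sum_{i,j}r_{ij}\log\frac{r_{ij}}{q_j h_j(\psi_V(x_i),y_i)}+\text{Const}$.

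Next I would insert the Gaussian form of the cluster centres, $\log h_j(\psi_V(x_i),y_i)=-\frac{1}{2\sigma^2}d(\psi_V(x_i),y_i,\Theta_j)-\log(\sqrt{2\pi}\sigma)$, which splits the surrogate into three pieces: the distortion term $\frac{1}{2\sigma^2}\cdot\frac{1}{N}\sum_{i,j}r_{ij}d(\psi_V(x_i),y_i,\Theta_j)$, which is exactly $\frac{1}{2\sigma^2}\mathbb{E}_{I,J}[d(\psi_V(x_i),y_i,\Theta_j)]$ once we fix the joint law $P(I=i,J=j)=r_{ij}/N$ (equivalently $I\sim\mathrm{Unif}[N]$ and $J\mid I=i\sim r_{i,\cdot}$); the normalizer constant $\log(\sqrt{2\pi}\sigma)$, again absorbed into $\text{Const}$; and an entropy-like leftover term $\frac{1}{N}\sum_{i,j}r_{ij}\log\frac{r_{ij}}{q_j}$.

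The third step is to extract the mutual information from the leftover. Minimizing it over $\mathbf{q}\in\Delta_K$ with $r$ held fixed, and writing $P_J(j)=\frac{1}{N}\sum_i r_{ij}$ for the induced $J$-marginal, the leftover equals $\frac{1}{N}\sum_{i,j}r_{ij}\log r_{ij}-\sum_j P_J(j)\log q_j$; the cross-entropy piece is minimized at $q_j=P_J(j)$, leaving $\frac{1}{N}\sum_{i,j}r_{ij}\log r_{ij}-\sum_j P_J(j)\log P_J(j)$, which --- because $I$ is uniform on $[N]$ --- is precisely $-H(J\mid I)+H(J)=\mathbb{I}(I;J)$. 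Collecting the three steps gives $\min_{Q\in\mathcal{Q}}D_{\text{KL}}(\hat{P}_N\|Q)=\min_{\Theta,r}\big[\mathbb{I}(I;J)+\frac{1}{2\sigma^2}\mathbb{E}_{I,J}[d(\psi_V(x_i),y_i,\Theta_j)]\big]+\text{Const}$; and since for each fixed $\Theta$ the inner minimizer over $r$ is the EM posterior determined by $\Theta$, the inner minimization is redundant and the right-hand side reduces to a minimization over $\Theta$ alone, as stated.

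I expect the main obstacle to be the bookkeeping of the nested minimizations rather than any single inequality. One has to verify carefully that jointly minimizing the variational surrogate over $(\mathbf{q},r)$ exactly reproduces $\min_{\mathbf{q}}\mathcal{L}_c$, so that passing to the surrogate is lossless, and that the optimal $q^\star$ used to produce the $\mathbb{I}(I;J)$ term coincides with the $q^\star$ that is consistent with the posterior $r$ --- this is exactly the fixed-point property of the EM updates. One must also be scrupulous about the probability space: both $\mathbb{E}_{I,J}[d]$ and $\mathbb{I}(I;J)$ must be taken under $P(i,j)=r_{ij}/N$ with $I$ uniform, since any other marginal for $I$ would break the identification of the leftover term with the mutual information in the third step.
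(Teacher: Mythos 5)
Your proposal is correct and follows essentially the same route as the paper: augment with the latent cluster indicator via the variational (EM) identity, substitute the Gaussian form of the cluster centres so the log splits into a distortion term and an entropy-like term, and optimize $q_j=\frac{1}{N}\sum_i r_{ij}$ to turn the leftover into $\mathbb{I}(I;J)$ under the joint law $P(i,j)=r_{ij}/N$. The only difference is cosmetic: the paper imports the variational reformulation as a cited lemma from the convex-clustering literature, whereas you derive it directly from the Jensen/log-sum identity, which makes your argument slightly more self-contained.
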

Actually $d$ models the conditional distribution $P(Y|\Psi_V)$. 
If in the underlying distribution of the empirical data $P(Y|\Psi_V)$ differs a lot between different clusters, the optimizer will put more efforts in optimizing $\mathbb{E}_{I,J}[d(\psi_V(x_i),y_i,\Theta_j)]$ to avoid inducing larger error, resulting in smaller efforts put on optimization of $\mathbb{I}(I;J)$ and a relatively larger $\mathbb{I}(I;J)$. 
This means data sample points $I$ have a larger mutual information with cluster index $J$, thus the clustering is prone to be more accurate.

\textbf{2. Orthogonality Property: Better $\theta_{inv}$ for better $\Psi_V$.}\quad 
Firstly, we prove the orthogonality property between $\theta_{inv}$(equation \ref{equ:theta_inv}) and parameters $\Theta$ of clustering centres $f_{\Theta_j}(\cdot)$.
\begin{theorem}(Orthogonality Property)
\label{theorem:orthogonality}
 Denote the data matrix of $j$-th environment $X^j$ and $\Psi_V^j=\Psi_V(X^j)$, then for each $\Theta_j(j\in[K])=((\Psi_V^j)^T\Psi_V^j)^{-1}(\Psi_V^j)^TY^j$, we have $\mathrm{Span}(\Theta)\subseteq \mathrm{Ker}(\theta_{inv})$ and $\mathrm{Span}(\Psi_V^j)\subseteq \mathrm{Ker}(\theta_{inv})$, where $\mathrm{Span}$ denotes the column space and $\mathrm{Ker}$ the null space.
\end{theorem}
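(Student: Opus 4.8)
The plan is to read the claim as a purely linear-algebraic consequence of the orthogonalization step in equation~\ref{equ:transform}, together with the elementary fact that an ordinary least-squares estimator lies in the row space of its design matrix; I would organize it in three moves.

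First I would establish the inclusion $\mathrm{Span}(\Psi_V^j)\subseteq\mathrm{Ker}(\theta_{inv})$ straight from the definition of $\Psi_V$. By equation~\ref{equ:transform}, for every data point $x_i$ the vector $\Psi_V(x_i)$ is $U_i S$ with its component along $\theta_{inv}$ subtracted, so $\langle\Psi_V(x_i),\theta_{inv}\rangle = \langle U_i S,\theta_{inv}\rangle - \langle U_i S,\theta_{inv}\rangle\,\|\theta_{inv}\|^2/\|\theta_{inv}\|^2 = 0$. Since this holds for every $x_i\in X^j$, each generator of $\mathrm{Span}(\Psi_V^j)$ is annihilated by the linear functional $v\mapsto\langle\theta_{inv},v\rangle$, which is exactly the inclusion. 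One should be explicit here that the $\theta_{inv}$ in the theorem is precisely the invariant direction projected out in equation~\ref{equ:transform} to build the current $\Psi_V$ (i.e.\ $\theta_{inv}^{(t)}$ and $\Psi_V^{(t+1)}$ of the same sweep), and that $\mathrm{Span}(\Psi_V^j)$ is read as the span of the feature vectors $\{\Psi_V(x_i):x_i\in X^j\}$ irrespective of how the matrix is laid out.

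Second I would show $\Theta_j\in\mathrm{Span}(\Psi_V^j)$ for each $j$, which together with the first move gives $\mathrm{Span}(\Theta)\subseteq\mathrm{Ker}(\theta_{inv})$. Write $M_j=(\Psi_V^j)^T\Psi_V^j$ and $b_j=(\Psi_V^j)^TY^j$. Then $b_j\in\mathrm{range}((\Psi_V^j)^T)$, and since $\ker(A^TA)=\ker(A)$ for any matrix $A$ one gets $\mathrm{range}(A^TA)=\mathrm{range}(A^T)$ (equal dimension plus the trivial inclusion), so $b_j\in\mathrm{range}(M_j)$. The estimator $\Theta_j=M_j^{-1}b_j$ has to be read as the Moore--Penrose pseudoinverse applied to $b_j$ --- equivalently, as $\lim_{\varepsilon\to 0^+}(M_j+\varepsilon I)^{-1}b_j$ --- because by the first move $\mathrm{rank}(M_j)\le\dim\mathrm{Ker}(\theta_{inv})\le k-1$, so $M_j$ is genuinely singular; either reading keeps the solution in $\mathrm{range}(M_j)=\mathrm{range}((\Psi_V^j)^T)=\mathrm{Span}(\Psi_V^j)\subseteq\mathrm{Ker}(\theta_{inv})$. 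Finally, $\mathrm{Ker}(\theta_{inv})$ is a linear subspace containing each $\Theta_j$, so it contains the column span of $\Theta=[\Theta_1,\dots,\Theta_K]$ as well.

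The step I expect to need the most care is the singular Gram matrix: precisely because $\Psi_V$ has been projected onto the hyperplane $\theta_{inv}^\perp$, the matrix $(\Psi_V^j)^T\Psi_V^j$ can never have full rank $k$, so the displayed $(\cdot)^{-1}$ must be a pseudoinverse (or ridge limit), and one has to verify that $b_j$ genuinely lies in its range so that no component outside $\mathrm{Span}(\Psi_V^j)$ sneaks in; the identity $\mathrm{range}(A^TA)=\mathrm{range}(A^T)$ is what closes that gap. Everything else is bookkeeping. The conceptual payoff, worth stating right after the proof, is that projecting $\theta_{inv}$ out of the features forces every linear cluster-centre model $f_{\Theta_j}$ to put zero weight on the invariant direction, so $\mathcal{M}_c$ clusters purely on the variant component $\Psi_V$ --- exactly the ``better $\theta_{inv}\Rightarrow$ better $\Psi_V$'' half of the mutual-promotion argument.
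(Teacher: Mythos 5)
Your proposal is correct and follows essentially the same route as the paper's proof: first, $\langle \Psi_V(x_i),\theta_{inv}\rangle=0$ directly from the projection in equation~\ref{equ:transform}, giving $\mathrm{Span}(\Psi_V^j)\subseteq\mathrm{Ker}(\theta_{inv})$; second, the least-squares centre $\Theta_j$ lies in the span of the feature vectors, hence $\Theta_j^T\theta_{inv}=0$ --- the paper exhibits this via the push-through identity $((\Psi_V^j)^T\Psi_V^j)^{-1}(\Psi_V^j)^TY^j=(\Psi_V^j)^T((\Psi_V^j)(\Psi_V^j)^T)^{-1}Y^j$, while you use $\mathrm{range}(A^TA)=\mathrm{range}(A^T)$ together with a pseudoinverse reading of the inverse. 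If anything, your treatment is more careful than the paper's, since the displayed Gram inverse cannot literally exist once the features are confined to the hyperplane $\theta_{inv}^{\perp}$, a degeneracy the paper passes over silently.
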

Theorem \ref{theorem:orthogonality} justifies that the parameter space for clustering model $f_\Theta(\cdot)$ as well as the space of learned variant components $\Psi_V$ is orthogonal to the invariant direction $\theta_{inv}$, which indicates that better invariant direction $\theta_{inv}$ regulates better variant components $\Psi_V$ and therefore better heterogeneity.
Taking (1) and (2) together, we conclude that better results($\theta_{inv}$) of $\mathcal{M}_p$ promotes the latent heterogeneity exploration in $\mathcal{M}_c$ because of larger between-cluster distance.
Finally, we use a linear but general setting for further clarification.
\begin{example}
\label{problem2}
	Assume that data points from environments $e \in \mathcal{E}$ are generated as follows:
	\begin{equation}
		\small X = Y(\Psi_S^* + \beta_e \Psi_V^*) + \mathcal{N}(0,\Sigma) \in \mathbb{R}^d
	\end{equation}	
	where $Y = \pm 1$ with equal probability, the coefficient $\beta_e$ varies across environment $e$, $\Psi_S^* \in \mathbb{R}^d$ is the invariant feature and following functional representation lemma \cite{networkinformation} $\Psi_V^*$ is the variant feature with $\Psi_V^* \perp \Psi_S^* \in\mathbb{R}^d$ and its relationship with the target $Y$ relies on the environment-specific $\beta_e$.
\end{example}
\begin{remark}
	In example \ref{problem2}, when $\mathcal{M}_p$ achieves optimal, we have $\theta_{inv}=\Psi_S^*$, which is the mid vertical hyperplane of the two Gaussian distribution.
	Then following equation \ref{equ:transform}, we have $\Psi_V=X-(X^T\theta_{inv})\theta_{inv}=Y\beta_e\Psi_V^*$, which directly shows that in the next iteration, $\mathcal{M}_c$ uses solely variant components $\Psi_V^*$ in $X$ to learn environments $\mathcal{E}_{learn}$ with diverse $P(Y|X)=P(Y|\Psi_V^*)$, which by lemma \ref{lemma:kl} and theorem \ref{theorem:rate-distortion} gives the best clustering results. 
\end{remark}
%

\section{Experiments}
In this section, we validate the effectiveness of our method on synthetic data and real-world data.

\textbf{Baselines}\quad We compare our proposed KerHRM with the following methods:
\begin{small}
     \begin{itemize}
     \item Empirical Risk Minimization(ERM):\quad $\min_\theta \mathbb{E}_{P_{tr}}[\ell(\theta;X,Y)]$
     \item Distributionally Robust Optimization(DRO \cite{f-dro}):\quad $\min_\theta \sup_{Q\in D_f(Q,P_{tr})\leq \rho}\mathbb{E}_Q[\ell(\theta;X,Y)]$
     \item Environment Inference for Invariant Learning(EIIL \cite{creager2020environment}):
         \begin{equation}
                     \min_\Phi \max_u \sum_{e\in\mathcal{E}}\frac{1}{N_e}\sum_i u_i(e)\ell(w\odot\Phi(x_i),y_i) +\lambda \|\nabla_{w|w=1.0}\frac{1}{N_e}\sum_iu_i(e)\ell(w\odot\Phi(x_i),y_i)\|_2
         \end{equation}
     \item Heterogeneous Risk Minimization(HRM \cite{hrm})
     \item Invariant Risk Minimization(IRM \cite{irm}) with environment $\mathcal{E}_{tr}$ labels:
         \begin{small}
         \begin{equation}
             \min_\Phi \sum_{e\in\mathcal{E}_{tr}}\mathcal{L}^e + \lambda \|\nabla_{w|w=1.0}\mathcal{L}^e(w\odot \Phi)\|^2
         \end{equation}
         \end{small}
 \end{itemize}
 \end{small}
 We choose one typical method\cite{f-dro} of DRO as DRO is another main branch of methods for OOD generalization problem of the same setting with us (no environment labels).
 And HRM and EIIL are another methods for inferring environments for invariant learning without environment labels.
 We choose IRM as another baseline for its fame in invariant learning, but note that IRM is based on multiple training environments and we provide $\mathcal{E}_{tr}$ labels for it, while the others do not need.
 Further, for ablation study, we run KerHRM for only one iteration without the feedback loop and denote it as Static KerHRM(KerHRM$^s$).
For all experiments, we use a two-layer MLP with 1024 hidden units. 
 
 \textbf{Evaluation Metrics}\quad To evaluate the prediction performance, for task with only one testing environment, we simply use the prediction accuracy of the testing environment. While for tasks with multiple environments, we introduce $\mathrm{Mean\_Error}$ defined as $\mathrm{Mean\_Error} = \frac{1}{|\mathcal{E}_{test}|} \sum_{e\in \mathcal{E}_{test}} \mathcal{L}^e$, $\mathrm{Std\_Error}$ defined as $\mathrm{Std\_Error} = \sqrt{\frac{1}{|\mathcal{E}_{test}|-1}\sum_{e\in \mathcal{E}_{test}}(\mathcal{L}^e-\mathrm{Mean\_Error})^2}$, which are mean and standard deviation error across $\mathcal{E}_{test}$. And we use the average mean square error for $\mathcal{L}^e$.

\subsection{Synthetic Data}
\textbf{Classification with Spurious Correlation}\\
Following \cite{over-parameterization}, we induce the spurious correlation between the label $Y\in\{+1,-1\}$ and a spurious attribute $A\in\{+1,-1\}$. 
Specifically, each environment is characterized by its bias rate $r\in (0,1]$, where the bias rate $r$ represents that for $100*r\%$ data, $A=Y$, and for the other $100*(1-r)\%$ data, $A=-Y$.
Intuitively, $r$ measures the strength and direction of the spurious correlation between the label $Y$ and spurious attribute $A$, where larger $|r-0.5|$ signifies higher spurious correlation between $Y$ and $A$, and $\mathrm{sign}(r-0.5)$ represents the direction of such spurious correlation, since there is no spurious correlation when $r=0.5$. 
We assume $X = H[S,V]^T \in\mathbb{R}^{2d}$, where $S\in\mathbb{R}^d$ is the invariant feature generated from label $Y$ and $V$ the variant feature generated from spurious attribute $A$:
\begin{equation}
	\small S|Y \sim \mathcal{N}(Y\textbf{1}, \sigma_s^2I_d),\ V|A \sim \mathcal{N}(A\textbf{1}, \sigma_v^2I_d)
\end{equation}	
and $H\in\mathbb{R}^{2d\times 2d}$ is an random orthogonal matrix to scramble the invariant and variant component, which makes it more practical.
Typically, we set $\sigma_v^2 \geq \sigma_s^2$ to let the model more prone to use spurious $V$ since $V$ is more informative.

In training, we set $d=5$ and generate 2000 data points, where $50\%$ points are from environment $e_1$ with $r_1=0.9$ and the other from environment $e_2$ with $r_2$.
For our method, we set the cluster number $K=2$.
In testing, we generate 1000 data points from environment $e_3$ with $r_3=0.1$ to induce distributional shifts from training.
In this experiments, we vary the bias rate $r_2$ of environment $e_2$ and the scrambled matrix $H$ which can be an orthogonal or identity matrix (as done in \cite{irm}), and results after 10 runs are reported in Table \ref{tab:sim1}.

From the results, we have the following observations and analysis:
\textbf{ERM} suffers from the distributional shifts between training and testing, which yields the worst performance in testing.
\textbf{DRO} can only provide slight resistance to distributional shifts, which we think is due to the over-pessimism problem\cite{unlabeleddro}.
\textbf{EIIL} achieves the best training performance but also performs poorly in testing.
\textbf{HRM} outperforms the above three baselines, but its testing accuracy is just around the random guess(0.50), which is due to the disturbance of the simple raw feature setting in \cite{hrm}.
\textbf{IRM} performs better when the heterogeneity between training environments is large($r_2$ is small), which verifies our analysis in section \ref{sec:OOD} that the performance of invariant learning methods highly depends on the quality of the given $\mathcal{E}_{tr}$.
Compared to all baselines, our \textbf{KerHRM} performs the best with respect to highest testing accuracy and lowest $(\mathrm{Train\_Acc}-\mathrm{Test\_Acc})$, showing its superiority to IRM and original HRM.

Further, we also empirically analyze the sensitivity to the choice of cluster number $K$ of our KerHRM. 
We set $r_2 = 0.80$ and test the performance with $K=\{2,3,4,5\}$ respectively.
Results compared with IRM are shown in Table \ref{tab:ablation}.
From the results, we can see that the cluster number of our methods does not need to be the ground truth number(ground truth is 2) and our KerHRM is not sensitive to the choice of cluster number $K$.
Intuitively, we only need the learned environments to reflect the variance of relationships between $P(Y|\Psi_V^*)$, but do not require the environments to be ground truth. However, we notice that when $K$ is far away from the proper one, the convergence of clustering algorithm is much slower.

\begin{table*}[t]
	\centering
	\caption{Results in classification simulation experiments of different methods with varying bias rate $r_2$, and scrambled matrix $H$, and each result is averaged over ten times runs.}
	\label{tab:sim1}
	\vskip 0.05in
	
	\resizebox{0.85\textwidth}{16mm}{
	\begin{tabular}{|l|c|c|c|c|c|c|}
		\hline
		$r_2$&\multicolumn{2}{|c|}{$r_2=0.70$}&\multicolumn{2}{|c|}{$r_2=0.75$}&\multicolumn{2}{|c|}{$r_2=0.80$}\\
		\hline
		Methods &  $\mathrm{Train\_Acc}$ & $\mathrm{Test\_Acc}$&  $\mathrm{Train\_Acc}$ & $\mathrm{Test\_Acc}$&  $\mathrm{Train\_Acc}$ & $\mathrm{Test\_Acc}$ \\
		\hline 
		ERM & 0.850 & 0.400& 0.862& 0.325& 0.875& 0.254    \\
		DRO& 0.857 & 0.473& 0.870 & 0.432& 0.883 & 0.395   \\
		EIIL &\bf 0.927 & 0.523& \bf 0.925 & 0.470& \bf 0.946 & 0.463   \\
		HRM &0.836 &0.543 &0.832 & 0.519&0.852 &0.488 \\
		\hline
		IRM(with $\mathcal{E}_{tr}$ label) & 0.836 & 0.606& 0.853 & 0.544 & 0.877& 0.401\\
		\hline
		KerHRM$^s$ & 0.764 & 0.671& 0.782 & 0.632& 0.663& 0.619   \\
		KerHRM & 0.759 &\bf 0.724& 0.760 & \bf 0.686 & 0.741& \bf 0.693   \\
		\hline
	\end{tabular}
	}
	\vskip -0.1in
\end{table*}

\begin{table*}
	\centering
	\caption{Ablation study on the cluster number $K$. Each result is averaged over ten times runs.}
	\label{tab:ablation}
	\resizebox{0.7\textwidth}{8mm}{
	\begin{tabular}{|l|c|c|c|c|c|c|}
		\hline 
		& IRM & HRM & \tabincell{c}{KerHRM\\{\small($K=2$)}} & \tabincell{c}{KerHRM\\{\small ($K=3$)}} & \tabincell{c}{KerHRM\\{\small($K=4$)}} & \tabincell{c}{KerHRM\\{\small ($K=5$)}}\\
		\hline
	Train\_Acc &0.877 & 0.852& 0.741& 0.758&0.756&0.753 \\
	\hline
	Test\_Acc & 0.401 & 0.488& 0.693&0.687&0.698&0.668\\
	\hline
	\end{tabular}
	}
\end{table*}

\textbf{Regression with Selection Bias}\\
In this setting, we induce the spurious correlation between the label $Y$ and spurious attributes $V$ through selection bias mechanism, which is similar to that in \cite{dwr}.
We assume $X=H[S,V]^T \in \mathbb{R}^d$ and $Y = f(S)+\epsilon$, where $f(\cdot)$ is a non-linear function such that $P(Y|S)$ remains invariant across environments while $P(Y|V)$ changes arbitrarily. 
For simplicity, we select data $(x_i,y_i)$ with probability $P(x_i,y_i)$ according to a certain variable $V_b \in V$:
\begin{small}
\begin{equation}
	\hat{P}(x_i,y_i) =  |r|^{-5 * |y_i -\mathrm{sign}(r)*V_b|}
\end{equation}
\end{small}
where $|r| > 1$.
Intuitively, $r$ eventually controls the strengths and direction of the spurious correlation between $V_b$ and $Y$(i.e. if $r>0$, a data point whose $V_b$ is close to its $y$ is more probably to be selected.).
The larger value of $|r|$ means the stronger spurious correlation between $V_b$ and $Y$, and $r > 0$ means positive correlation and vice versa. 
Therefore, here we use $r$ to define different environments.

In training, we generate 1000 points from environment $e_1$ with a predefined $r$ and $100$ points from $e_2$ with $r=-1.1$. 
In testing, to simulate distributional shifts, we generate data points for 6 environments with $r \in [-2.9, -2.7,\dots,-1.9]$. 
We compare our KerHRM with ERM, DRO, EIIL and IRM. 
We conduct experiments with different settings on $r$ and the scrambled matrix $H$. 

From the results in Table \ref{tab:sim_selection}, we have the following analysis:
\textbf{ERM}, \textbf{DRO} and \textbf{EIIL} performs poor with respect to high average and stability error, which is similar to that in classification experiments(Table \ref{tab:sim1}).
The results of \textbf{HRM} are quite different in two scenarios, where Scenario 1 corresponds to the simple raw feature setting($H=I$) in \cite{hrm} but Scenario 2 violates such simple setting with random orthogonal $H$ and greatly harms HRM.
Compared to all baselines, our \textbf{KerHRM} achieves lowest average error in 5/6 settings, and its superiority is especially obvious in our more general setting(Scenario 2).

\begin{table*}[t]
	\centering
	\caption{Results in selection bias simulation experiments of different methods with varying selection bias $r$, and scrambled matrix $H$, and each result is averaged over ten times runs.}
	\label{tab:sim_selection}
	\vskip 0.05in
	
	\resizebox{0.95\textwidth}{34mm}{
	\begin{tabular}{|l|c|c|c|c|c|c|}
		\hline
		\multicolumn{7}{|c|}{\textbf{Scenario 1: Non-Scrambled Setting ($H=I$, varying $r$)}}\\
		\hline
		$r$&\multicolumn{2}{|c|}{$r=1.5$}&\multicolumn{2}{|c|}{$r=1.9$}&\multicolumn{2}{|c|}{$r=2.3$}\\
		\hline
		Methods &  $\mathrm{Mean\_Error}$ & $\mathrm{Std\_Error}$& $\mathrm{Mean\_Error}$ & $\mathrm{Std\_Error}$ &  $\mathrm{Mean\_Error}$ & $\mathrm{Std\_Error}$ \\
		\hline 
		ERM & 5.056 & 0.223 & 5.442&0.204 & 5.503& 0.234 \\
		DRO& 4.571 & 0.205& 4.908 & 0.180 & 5.081& 0.209  \\
		EIIL & 5.006 & 0.211& 5.252 & 0.172 & 5.428& 0.205 \\
		HRM & \bf 3.625 &\bf 0.057 & 3.901 &\bf 0.050 &4.017&\bf 0.082\\
		\hline
		IRM(with $\mathcal{E}_{tr}$ label) & 3.873 & 0.176& 4.536 & 0.172 & 4.509& 0.194 \\
		\hline
		KerHRM$^s$ & 4.384 & 0.191& 3.989 &  0.195 & 3.527& 0.178   \\
		KerHRM & 4.112 & 0.182& \bf 3.659 &  0.186 &\bf 3.409& 0.174   \\
		\hline
		\multicolumn{7}{|c|}{\textbf{Scenario 2: Scrambled Setting (random orthogonal $H$, varying $r$)}}\\
		\hline
		$r$&\multicolumn{2}{|c|}{$r=1.5$}&\multicolumn{2}{|c|}{$r=1.9$}&\multicolumn{2}{|c|}{$r=2.3$}\\
		\hline
		Methods &  $\mathrm{Mean\_Error}$ & $\mathrm{Std\_Error}$& $\mathrm{Mean\_Error}$ & $\mathrm{Std\_Error}$ &  $\mathrm{Mean\_Error}$ & $\mathrm{Std\_Error}$ \\
		\hline 
		ERM  & 5.059&0.229 & 5.285& 0.207  & 5.478& 0.211\\
		DRO&4.494& 0.212 &4.717 & 0.175 & 4.978 & 0.207   \\
		EIIL & 4.945 & 0.215& 5.207 & 0.187 & 5.294& 0.220   \\
		HRM & 4.397 &\bf 0.096 & 4.801 &\bf 0.142 &4.721&\bf 0.096\\
		\hline
		IRM(with $\mathcal{E}_{tr}$ label) & 4.269 & 0.218 & 4.477 & 0.174 & 4.392&  0.178 \\
		\hline
		KerHRM$^s$ & 4.379 & 0.205& 3.543 & 0.169 & 3.571& 0.164  \\
		KerHRM &\bf 4.122 & 0.195& \bf 3.375 &  0.163 &\bf 3.473& 0.160  \\
		\hline
	\end{tabular}
	}
	\vskip -0.1in
\end{table*}

\textbf{Colored MNIST}\\
To further validate our method's capacity under general settings, we use the colored MNIST dataset, where data $X$ are high-dimensional non-linear transformation from invariant features(digits $Y$) and variant features(color $C$).
Following \cite{irm}, we build a synthetic binary classification task, where each image is colored either red or green in a way that strongly and spuriously correlates with the class label $Y$.
Firstly, a binary label $Y$ is assigned to each images according to its digits: $Y=0$ for digits 0$\sim$4 and $Y=1$ for digits 5$\sim$9.
Secondly, we sample the color id $C$ by flipping $Y$ with probability $e$ and therefore forms environments, where $e=0.1$ for the first training environment, $e=0.2$ for the second training environments and $e=0.9$ for the testing environment.
Thirdly, we induce noisy labels by randomly flipping the label $Y$ with probability 0.2.

We randomly sample 2500 images for each environments, and the two training environments are mixed without environment label $\mathcal{E}_{tr}$ for ERM, DRO, EIIL, HRM$^s$ and HRM, while for IRM, the $\mathcal{E}_{tr}$ labels are provided.
For IRM, we sample 1000 data from the two training environments respectively and select the hyper-parameters which maximize the minimum accuracy of two validation environments. 
Note that we have no access to the testing environment while training, therefore we cannot resort to testing data to select the best one, which is more reasonable and different from that in \cite{irm}.
For the others, since we have no access to $\mathcal{E}$ labels, we simply pool the 2000 data points for validation.
The results are shown in Table \ref{tab:mnist}, where Perfect Inv. Model represents the oracle results that can be achieved under this setting.
We run each method for 5 times and report the average accuracy, and since the variance of all methods are relatively small, we omit it in the table.

\begin{small}
	\begin{table}[h]
\center
\caption{Colored MNIST results. The first row indicates whether each method needs the environment label. The Perfect Inv. Model represents the oracle results that can be achieved. The Generalization Gap is defined as $(\mathrm{Test\ Accuracy}-\mathrm{Train\ Accuracy})$.}
\label{tab:mnist}
\resizebox{0.8\textwidth}{10mm}{
\begin{tabular}{|l|c|c|c|c|c|c|c|c|}
\hline
Method         & ERM & DRO  & EIIL & HRM & IRM & KerHRM$^s$ & KerHRM  & \tabincell{c}{Perfect\\ Inv. Model}\\
\hline 
Need $\mathcal{E}_{tr}$ Label? & \XSolidBrush & \XSolidBrush & \XSolidBrush& \XSolidBrush & \Checkmark & \XSolidBrush & \XSolidBrush & -\\
\hline
Train Accuracy &\bf 0.845 & 0.644 & 0.777& 0.835 & 0.766 & 0.802 & 0.654 & 0.800   \\
\hline
Test Accuracy  &0.106 & 0.419 & 0.542 & 0.282 & 0.468 & 0.296 &\bf 0.648 & 0.800\\
\hline
Generalization Gap & -0.739 & -0.223 & -0.235 &-0.553 & -0.298 & -0.506 & \bf -0.006 & -\\
\hline
\end{tabular}}
\end{table}
\end{small}
From the results, our \textbf{KerHRM} generalize the \textbf{HRM} to much more complicated data and consistently achieves the best performances.
\textbf{KerHRM} even outperforms IRM significantly in an unfair setting where we provide perfect environment labels for IRM, which shows the limitation of manually labeled environments.
Further, to best show the mutual promotion between $\mathcal{M}_c$ and $\mathcal{M}_p$, we plot the training and testing accuracy as well as the KL-divergence $D_{\text{KL}}$ of $P(Y|C)$ between the learned $\mathcal{E}_{learn}$ over iterations in figure \ref{fig:mnist}.
From figure \ref{fig:mnist}, we firstly validate the mutual promotion between $\mathcal{M}_c$ and $\mathcal{M}_p$ since $D_{\text{KL}}$ and testing accuracy escalate synchronously over iterations.
Secondly, figure \ref{fig:mnist} corresponds to our analysis in section \ref{sec:OOD} that the performance of invariant learning method is highly correlated to the heterogeneity of $\mathcal{E}_{tr}$, which sheds lights to the importance of how to leverage the intrinsic heterogeneity in training data for invariant learning.

\subsection{Real-world Data}
In this experiment, we test our method on a real-world regression dataset (Kaggle) of house sales prices from King County, USA\footnote{https://www.kaggle.com/c/house-prices-advanced-regression-techniques/data}, where the target variable is the transaction price of the house and each sample contains 17 predictive variables, such as the built year, number of bedrooms, and square footage of home, etc.
 Since it is fairly reasonable to assume the relationships between predictive variables and the target vary along the time (for example, the pricing mode may change along the time), there exist distributional shifts in the price-prediction task with respect to the build year of houses.
 Specifically, the houses in this dataset were built between $1900\sim 2015$, and we divide the whole dataset into 6 periods, where each contains a time span of two decades.
 Notice that the later periods have larger distributional shifts.
 We train all methods on the first period where $\mathrm{built\_year}\in[1900,1920)$ and test on the other 5 periods and report the average results over 10 runs in figure \ref{fig:house}.
 For IRM, we further divide the period 1 into two decades for the $\mathcal{E}_{tr}$ provided.
 \begin{figure*}[!ht]
\small
\vskip -0.1in
     \begin{minipage}{0.49\textwidth}
       \includegraphics[width=\textwidth]{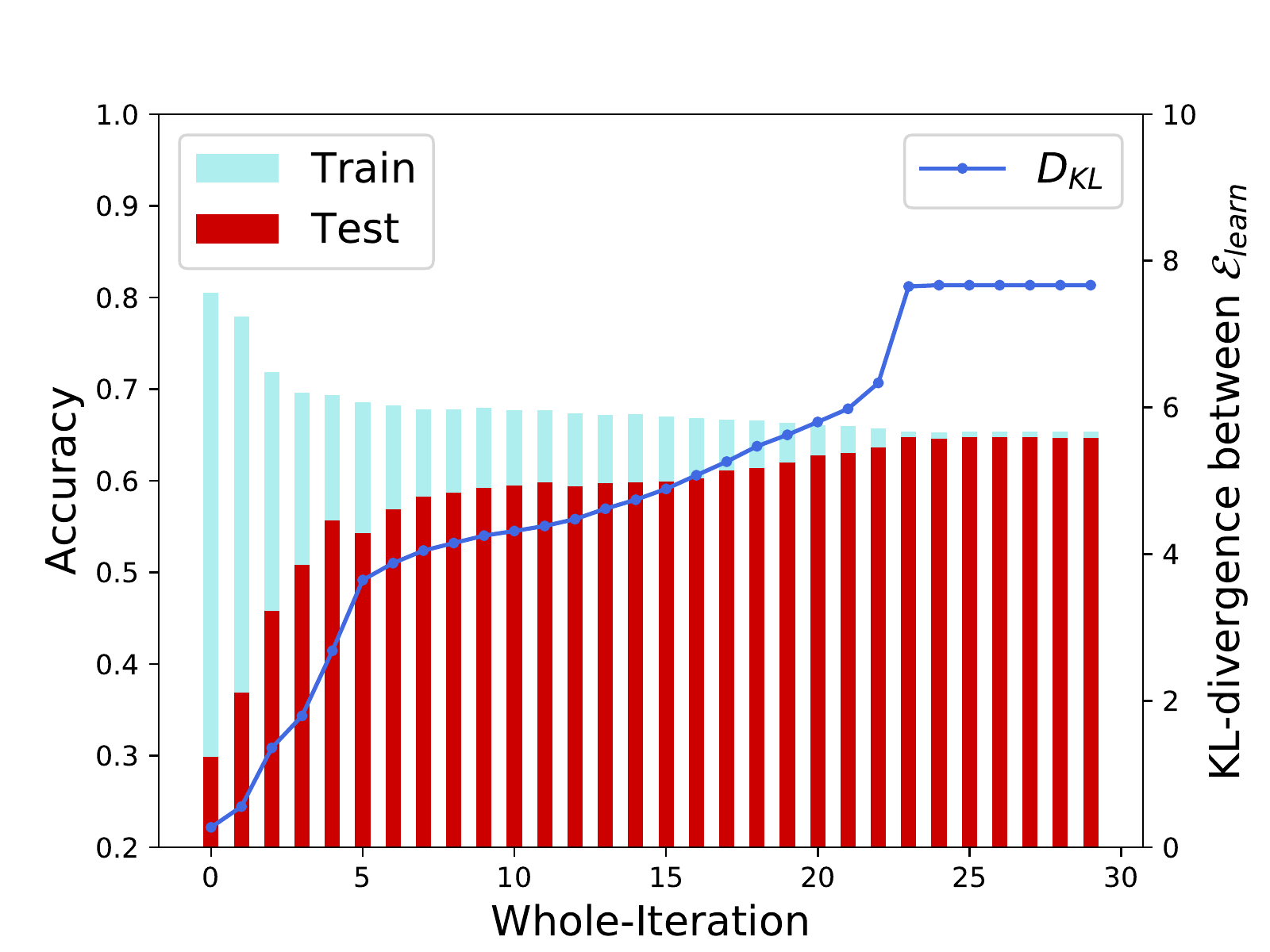}
       \vskip -0.1in
 	\caption{Results for the Colored MNIST task. We plot the training and testing accuracy, as well as the KL-divergence between learned $\mathcal{E}_{tr}$.}
 	\label{fig:mnist}
      \end{minipage}
     \hfill
     \begin{minipage}{0.49\textwidth}
		\includegraphics[width=\textwidth]{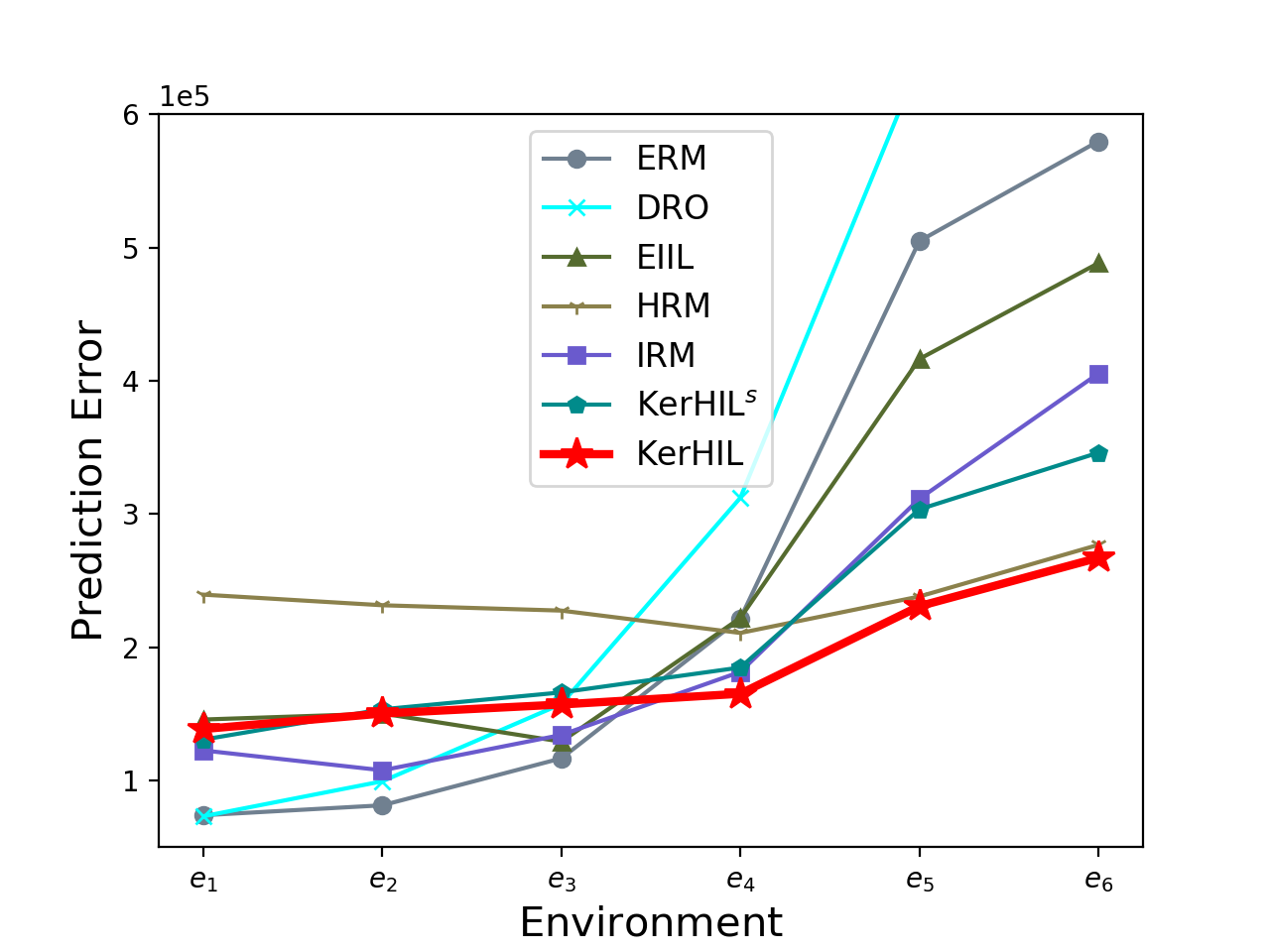}
		\vskip -0.1in
 	\caption{Results for the real-word regression task. We train all methods on $e_1$ and test on the others, and report the average results over 10 runs.}
 	\label{fig:house}     
     \end{minipage}
     \vskip -0.1in
\end{figure*}

\textbf{Analysis}\quad The testing errors of \textbf{ERM} and \textbf{DRO} increase sharply across environments, indicating the existence of the distributional shifts between environments.
\textbf{IRM} performs better than ERM and DRO, which shows the usefulness of environment labels for OOD generalization and the possibility of learning invariant predictor from multiple environments.
The proposed \textbf{KerHRM} outperforms \textbf{EIIL} and \textbf{HRM}, which validates its superiority of heterogeneity exploration.
\textbf{KerHRM} even outperforms IRM, which indicates the limitation of manually labeled environments in invariant learning and the necessity of latent heterogeneity exploration.

\section{Limitations}
Although the proposed KerHRM is a competitive method, it has several limitations.
Firstly, since in $\mathcal{M}_c$ we take the model parameters as cluster centres, the strict convergence guarantee for our clustering algorithm $\mathcal{M}_c$ is quite hard to analyze.
And empirically, we find when the pre-defined cluster number $K$ is far away from the ground-truth, the convergence of $\mathcal{M}_c$ will become quite slow.
Further, such restriction also affects the analysis of the mutual promotion between $\mathcal{M}_c$ and $\mathcal{M}_p$, which we can only empirically provide some verification.
Besides, although we incorporate Neural Tangent Kernel to deal with data beyond raw feature level, how to deal with more complicated data still remains unsolved.
Also, how to incorporate deep learning with the mutual promotion between the two modules needs further investigation, and we left it for future work. 

\section{Conclusion}
In this paper, we propose the KerHRM algorithm for the OOD generalization problem, which achieves both the latent heterogeneity exploration and invariant prediction.
From our theoretical and empirical analysis, we find that the heterogeneity of environments plays a key role in invariant learning, which is consistent with some recent analysis\cite{DBLP:conf/iclr/RosenfeldRR21} and opens a new line of research for OOD generalization problem.
Our code is available at \href{https://github.com/LJSthu/Kernelized-HRM}{https://github.com/LJSthu/Kernelized-HRM}.

\section*{Acknowledgements}
This work was supported National Key R\&D Program of China (No. 2018AAA0102004).

\appendix

\section{Appendix}
\subsection{Experimental Details}
In this section, we introduce the experimental details as well as additional results.
In all experiments, we take $k=\{10,15,20,25\}$ for our KerHIL and select the best one according to the validation results.

{\bf Classification with Spurious Correlation}\\
For our synthetic data, we set $\sigma_s^2=3.0$ and $\sigma_v^2=0.3$ to let the model more prone to use spurious $V$ since $V$ is more informative.

{\bf Regression with Selection Bias}\\
In this setting, the correlations among covariates are perturbed through selection bias mechanism. 
According to assumption 2.1, we assume $X = H[S,V]^T \in \mathbb{R}^d$ and $S = [S_1, S_2, \dots, S_{n_s}]^T \in \mathbb{R}^{n_s}$ is independent from $V = [V_1, V_2, \dots, V_{n_v}]\in \mathbb{R}^{n_v}$ while the covariates in $S$ are dependent with each other. 
We assume $Y = f(S) + \epsilon$ and $P(Y|S)$ remains invariant across environments while $P(Y|V)$ can arbitrarily change. 

Therefore, we generate training data points with the help of auxiliary variables $Z \in \mathbb{R}^{n_s+1}$ as following:
\begin{align}
Z_1, \dots, Z_{n_s+1} &\stackrel{iid}{\sim} \mathcal{N}(0,1.0) \\
V_1, \dots, V_{n_v} &\stackrel{iid}{\sim} \mathcal{N}(0,1.0) \\
S_i = 0.8*Z_i + 0.2 * Z_{i+1} &\ \ \ \ \ for \ \ i = 1, \dots, n_s
\end{align}
To induce model misspecification, we generate $Y$ as:
\begin{equation}
Y = f(S) + \epsilon = \theta_s^TS + \beta*S_1S_2S_3+\epsilon
\end{equation}
where $\theta_s = [\frac{1}{2},-1, 1, -\frac{1}{2}, 1, -1 , \dots] \in \mathbb{R}^{n_s}$, and $\epsilon \sim \mathcal{N}(0, 0.3)$. 
For our synthetic data, we set $\beta=5.0$, $n_s=5$ and $d=10$.
As we assume that $P(Y|S)$ remains unchanged while $P(Y|V)$ can vary across environments, we design a data selection mechanism to induce this kind of distribution shifts.
For simplicity, we select data points according to a certain variable set $V_b \in V$:

\begin{align}
&\hat{P}(x,y) = |r|^{-5*|y - sign(r)*V_b|}  \\
&\mu \sim Uni(0,1 ) \\
&M(r;(x,y)) =
\begin{cases}
1, \ \ \ \ \ &\text{$\mu \leq \hat{P}$ } \\
0, \ \ \ \ \ &\text{otherwise}
\end{cases} 
\end{align}  
where $|r| > 1$.
Given a certain $r$, a data point $(x,y)$ is selected if and only if $M(r;(x,y))=1$ (i.e. if $r>0$, a data point whose $V_b$ is close to its $Y$ is more probably to be selected.)
Intuitively, $r$ eventually controls the strengths and direction of the spurious correlation between $V_b$ and $Y$(i.e. if $r>0$, a data point whose $V_b$ is close to its $Y$ is more probably to be selected.).
The larger value of $|r|$ means the stronger spurious correlation between $V_b$ and $Y$, and $r \ge 0$ means positive correlation and vice versa. 
Therefore, here we use $r$ to define different environments.


\subsection{Proof of Theorems}
\subsubsection{Proof of Theorem 2.1}
First, we would like to prove that a random variable satisfying assumption 2.1 is MIP.
 \begin{theorem}
 	\label{theorem:equ}
     A representation $\Psi^*_S \in \mathcal{I}$ satisfying assumption 2.1 is the maximal invariant predictor.   
 \end{theorem}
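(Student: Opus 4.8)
The claim is that any representation $\Psi^*_S \in \mathcal{I}$ satisfying Assumption \ref{assumption1: main} coincides with the maximal invariant predictor $S = \arg\max_{\Phi \in \mathcal{I}_{\mathcal{E}}} \mathbb{I}(Y;\Phi)$ of Definition \ref{definition2:invariance set}. The plan is to verify two things: (i) $\Psi^*_S$ actually lies in the invariance set $\mathcal{I}_{\mathcal{E}}$, and (ii) among all members of $\mathcal{I}_{\mathcal{E}}$, it attains the maximal mutual information with $Y$, so it is (up to the usual information-equivalence) the MIP.

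First I would establish membership: $\Psi^*_S \in \mathcal{I}_{\mathcal{E}}$. By the invariance property (Assumption \ref{assumption1: main}a), $P^e(Y|\Psi^*_S(X)) = P^{e'}(Y|\Psi^*_S(X))$ for all $e,e' \in \mathrm{supp}(\mathcal{E})$, which is exactly the statement that $Y \perp \mathcal{E} \mid \Psi^*_S(X)$. Translating into Shannon entropies, this gives $H[Y|\Psi^*_S(X)] = H[Y|\Psi^*_S(X), \mathcal{E}]$, so $\Psi^*_S$ satisfies the defining condition of $\mathcal{I}_{\mathcal{E}}$.

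Next I would show maximality of $\mathbb{I}(Y;\Psi^*_S)$ over $\mathcal{I}_{\mathcal{E}}$, and here the sufficiency property (Assumption \ref{assumption1: main}b) does the work. Write $Y = f(\Psi^*_S) + \epsilon$ with $\epsilon \perp X$. The key observation is that any $\Phi \in \mathcal{I}_{\mathcal{E}}$ is a function of $X$, hence $\mathbb{I}(Y;\Phi) \le \mathbb{I}(Y;X)$; meanwhile the sufficiency decomposition implies $H[Y|\Psi^*_S] = H[\epsilon]$ and, because $\epsilon \perp X$, also $H[Y|X] = H[\epsilon]$, so $\mathbb{I}(Y;\Psi^*_S) = H[Y] - H[\epsilon] = \mathbb{I}(Y;X) \ge \mathbb{I}(Y;\Phi)$ for every $\Phi \in \mathcal{I}_{\mathcal{E}}$. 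Combining with step (i), $\Psi^*_S$ is a maximizer of $\mathbb{I}(Y;\cdot)$ over $\mathcal{I}_{\mathcal{E}}$, i.e. it is the MIP.

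The main obstacle I anticipate is not the logical skeleton above but making the measure-theoretic / information-theoretic bookkeeping rigorous: justifying that conditional-independence statements convert cleanly to entropy equalities (and handling continuous $Y$ via differential entropy so that ``$H$'' and ``$\mathbb{I}$'' are well defined and the data-processing inequality $\mathbb{I}(Y;\Phi)\le\mathbb{I}(Y;X)$ applies), and phrasing ``$\Psi^*_S$ is the MIP'' correctly given that the MIP is only defined up to information equivalence. I would state these as the ``reasonable regularity assumptions'' the informal theorem already alludes to, and otherwise the argument is the short chain: invariance $\Rightarrow$ membership in $\mathcal{I}_{\mathcal{E}}$; sufficiency $\Rightarrow$ $\mathbb{I}(Y;\Psi^*_S) = \mathbb{I}(Y;X)$, which upper-bounds $\mathbb{I}(Y;\Phi)$ for all competitors.
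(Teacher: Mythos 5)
Your argument is correct, but it takes a different route from the paper's. The paper proves maximality by taking an arbitrary maximizer $\Phi' \in \mathcal{I}_{\mathcal{E}}$ and invoking the functional representation lemma to write $\Phi' = \sigma(\Psi^*_S, \Phi_{extra})$ with $\Phi_{extra} \perp \Psi^*_S$, then arguing via the sufficiency property that $I(Y;\Phi') = I(f(\Psi^*_S);\Psi^*_S,\Phi_{extra}) = I(f(\Psi^*_S);\Psi^*_S)$, i.e.\ the extra component carries no additional information about $Y$; it also proves a converse (that the MIP must satisfy sufficiency), which the stated claim does not require and you rightly omit. You instead bound every competitor by the data-processing inequality, $\mathbb{I}(Y;\Phi(X)) \le \mathbb{I}(Y;X)$, and show that sufficiency plus $\epsilon \perp X$ forces $\mathbb{I}(Y;\Psi^*_S) = H[Y]-H[\epsilon] = \mathbb{I}(Y;X)$, so $\Psi^*_S$ attains the global ceiling; you also verify membership $\Psi^*_S \in \mathcal{I}_{\mathcal{E}}$ from the invariance property, a step the paper leaves implicit. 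Your route is more elementary (no functional representation lemma) and yields the slightly stronger fact that $\Psi^*_S$ is informationally sufficient for $Y$ among \emph{all} representations of $X$, not only invariant ones; the paper's decomposition-based route is less self-contained but matches the structural $(\Psi^*_S,\Psi^*_V)$ viewpoint used throughout the paper and naturally extends to the converse direction it also records. Both arguments share the same informality about differential entropy and identification of the MIP only up to information equivalence, which you flag explicitly.
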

	
 \begin{proof}
 	$\rightarrow$: To prove $\Psi^*_S = \arg\min_{Z\in \mathcal{I}}I(Y;Z)$.
 	If $\Psi^*_S$ is not the maximal invariant predictor, assume $\Phi' = \arg\max_{Z \in \mathcal{I}} I(Y;Z)$. 
 	Using functional representation lemma, consider $(\Psi^*_S, \Phi')$, there exists random variable $\Phi_{extra}$ such that 
 	$\Phi^{'} = \sigma(\Psi^*_S, \Phi_{extra})$ and $\Phi^{*}\perp \Phi_{extra}$. 
 	Then $I(Y;\Phi^{'}) = I(Y;\Phi^{*}, \Phi_{extra}) = I(f(\Psi^*_S);\Psi^*_S, \Phi_{extra}) = I(f(\Psi^*_S);\Psi^*_S)$.

 	$\leftarrow$: To prove the maximal invariant predictor $\Psi^*_S$ satisfies the sufficiency property in assumption 2.1.
	
 	The converse-negative proposition is :
 	\begin{equation}
 		 Y \neq f(\Psi^*_S)+\epsilon \rightarrow \Psi^*_S \neq \arg\max_{Z\in\mathcal{I}}I(Y;Z)
 	\end{equation}
 	Suppose $Y \neq f(\Psi^*_S)+\epsilon$ and $\Psi^*_S = \arg\max_{Z\in\mathcal{I}}I(Y;Z)$, and suppose $Y = f(\Phi^{'})+\epsilon$ where $\Phi^{'}\neq \Psi^*_S$.
 	Then we have:
 	\begin{equation}
 		I(f(\Phi^{'});\Psi^*_S) \leq I(f(\Phi^{'});\Phi^{'}) 
 	\end{equation}
 	Therefore, $\Phi^{'}=\arg\max_{Z\in\mathcal{I}}I(Y;Z)$
 \end{proof}

Then we provide the proof of Theorem 2.1 with Assumption \ref{assumption:heterogeneity}.
\begin{assumption}
    \label{assumption:heterogeneity}
    $\mathrm{Heterogeneity\ Assumption}$.\\
    For random variable pair $(X,\Phi^*)$ and $\Phi^*$ satisfying Assumption \ref{assumption1: main}, using functional representation lemma \cite{networkinformation}, there exists random variable $\Psi^*$ such that $X = X(\Phi^*, \Psi^*)$, then we assume $P^e(Y|\Psi^*)$ can arbitrary change across environments $e\in \mathrm{supp}(\mathcal{E})$.
\end{assumption}

\begin{theorem}
 	Let $g$ be a strictly convex, differentiable function and let $D$ be the corresponding Bregman Loss function. Let $\Psi^*_S$ is the maximal invariant predictor with respect to $I_{\mathcal{E}}$, and put $h^*(X)=\mathbb{E}_Y[Y|\Psi^*_S]$. Under Assumption \ref{assumption:heterogeneity}, we have:
 	\begin{equation}
 		h^* = \arg\min_h \sup_{e \in \mathrm{supp}(\mathcal{E})} \mathbb{E}[D(h(X),Y)|e]
 	\end{equation}
\end{theorem}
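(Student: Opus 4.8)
The plan is to show the minimax problem decouples pointwise, so that the uniform-over-environments worst case is controlled by the invariant representation alone. First I would recall the standard fact about Bregman losses: for a strictly convex differentiable $g$ with associated Bregman divergence $D$, the conditional mean minimizes expected Bregman loss, i.e. for any sub-$\sigma$-algebra (here generated by $\Psi^*_S$) we have $\mathbb{E}[D(h(X),Y)\mid e] = \mathbb{E}[D(h(X),\mathbb{E}[Y\mid\Psi^*_S,e])\mid e] + \mathbb{E}[D(\mathbb{E}[Y\mid\Psi^*_S,e],Y)\mid e]$, the usual bias--variance style Pythagorean decomposition for Bregman divergences. By the Invariance property in Assumption \ref{assumption1: main}, $P^e(Y\mid\Psi^*_S)$ does not depend on $e$, hence $\mathbb{E}[Y\mid\Psi^*_S,e]=\mathbb{E}_Y[Y\mid\Psi^*_S]=h^*(X)$ for every $e\in\mathrm{supp}(\mathcal{E})$, and the irreducible second term is also environment-independent in its conditional structure. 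Plugging $h=h^*$ kills the first term, so $\mathbb{E}[D(h^*(X),Y)\mid e]$ equals a constant $c$ (the irreducible Bregman risk of predicting $Y$ from $\Psi^*_S$) for all $e$, giving $\sup_e \mathbb{E}[D(h^*(X),Y)\mid e] = c$.

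Next I would argue the reverse inequality: no predictor can do better in the worst case. For an arbitrary $h$, the decomposition gives $\sup_e \mathbb{E}[D(h(X),Y)\mid e] = c + \sup_e \mathbb{E}[D(h^*(X), h(X))\mid e]$ provided we can first reduce $h$ to a function of $\Psi^*_S$ — and this is where the Heterogeneity Assumption \ref{assumption:heterogeneity} enters. Write $X = X(\Phi^*,\Psi^*)$ with $\Psi^*$ the "variant" complement; any $h(X)$ that genuinely depends on $\Psi^*$ incurs, in the worst environment, an arbitrarily large penalty because $P^e(Y\mid\Psi^*)$ can be made adversarial (by Assumption \ref{assumption:heterogeneity}), so the $\sup_e$ blows up unless $h$ is measurable with respect to $\Psi^*_S$. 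Once $h=h(\Psi^*_S)$, the term $\mathbb{E}[D(h^*(X),h(X))\mid e]$ is nonnegative and vanishes only when $h=h^*$ a.s., so $h^*$ attains the minimum of the worst-case risk. Combining both directions yields $h^*=\arg\min_h\sup_{e}\mathbb{E}[D(h(X),Y)\mid e]$.

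The main obstacle I expect is making the "adversarial environment forces $h$ to ignore $\Psi^*$" argument rigorous: one needs to quantify how much worst-case risk a nontrivial dependence on $\Psi^*$ costs, which requires carefully unpacking what "$P^e(Y\mid\Psi^*)$ can arbitrarily change" means (presumably that the family of conditionals is rich enough to push the conditional Bregman risk to $+\infty$, or at least strictly above $c$), and then invoking the Bregman Pythagorean identity conditionally on $(\Psi^*_S,\Psi^*)$. A secondary technical point is justifying the Bregman conditional-mean decomposition itself — that $\mathbb{E}[D(h(X),Y)] = \mathbb{E}[D(h(X),\bar Y)] + \mathbb{E}[D(\bar Y,Y)]$ where $\bar Y$ is the relevant conditional expectation — which follows from the defining identity $D(a,b) = g(a)-g(b)-\langle\nabla g(b),a-b\rangle$ together with the tower property applied to the linear term, but should be stated cleanly. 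Everything else (measurability bookkeeping, the use of Theorem \ref{theorem:equ} to know $\Psi^*_S$ is the MIP) is routine.
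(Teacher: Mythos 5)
There is a genuine gap, and it sits exactly where you flag ``the main obstacle.'' Two load-bearing claims in your plan do not hold as stated. First, $\mathbb{E}[D(h^*(X),Y)\,|\,e]$ is \emph{not} a constant $c$ across environments: invariance only fixes $P^e(Y\mid\Psi_S^*)$, while the marginal $P^e(\Psi_S^*)$ may shift, so the irreducible risk $\mathbb{E}_{\Psi_S^*\sim P^e}\bigl[\mathbb{E}[D(h^*,Y)\mid\Psi_S^*]\bigr]$ varies with $e$; consequently your identity $\sup_e \mathbb{E}[D(h(X),Y)\,|\,e]=c+\sup_e\mathbb{E}[D(h^*,h)\,|\,e]$ does not follow. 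Second, the dichotomy ``either $h$ is $\sigma(\Psi_S^*)$-measurable or the worst case blows up (or is strictly above $c$)'' is precisely the statement that needs proof: Assumption \ref{assumption:heterogeneity} does not imply unbounded risk for a variant-dependent $h$ (its risk can stay bounded in every environment), and no blow-up is needed for the theorem. So the reduction to $h=h(\Psi_S^*)$, on which your whole second half rests, is missing rather than merely unpolished.

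The paper closes this gap with a construction your plan lacks, and it sidesteps both issues. For an arbitrary $h$ and an arbitrary $e$ with density $P(\psi_s,\psi_v,y)$, the heterogeneity assumption guarantees the existence of another environment $e'$ with density $Q(\psi_s,\psi_v,y)=P(\psi_s,y)\,Q(\psi_v)$, i.e.\ the variant component is decoupled from $(\Psi_S^*,Y)$ while the $(\Psi_S^*,Y)$-joint of $e$ is retained. For each fixed $\psi_v$, $h(\cdot,\psi_v)$ is just some predictor of $Y$ from $\Psi_S^*$ under $P(\psi_s,y)$, so Bregman optimality of the conditional mean gives $\int D(h(\psi_s,\psi_v),y)\,p(\psi_s,y)\,d\psi_s\,dy \ \ge\ \int D(h^*(\psi_s),y)\,p(\psi_s,y)\,d\psi_s\,dy$; integrating over $Q(\psi_v)$ yields $\mathbb{E}[D(h(X),Y)\,|\,e'] \ \ge\ \mathbb{E}[D(h^*(X),Y)\,|\,e]$, and taking suprema over $e$ gives minimax optimality of $h^*$ without ever requiring the invariant risk to be constant or the worst case to diverge. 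Note also a technical point in your decomposition: with $D(a,b)=g(a)-g(b)-\langle\nabla g(b),a-b\rangle$, the tower-property cancellation (and hence conditional-mean optimality) holds for $\mathbb{E}[D(Y,a)]$, not for $\mathbb{E}[D(a,Y)]$, so if you keep a Pythagorean-identity route you must fix the argument order, and you need sufficiency ($\mathbb{E}[Y\mid X,e]=\mathbb{E}[Y\mid\Psi_S^*]$) to kill the cross term when $h$ depends on all of $X$.
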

 \begin{proof}
 	Firstly, according to theorem \ref{theorem:equ}, $\Psi^*_S$ satisfies Assumption 2.1.	
 	Consider any function $h$, we would like to prove that for each distribution $P^e$($e \in \mathcal{E}$), there exists an environment $e'$ such that:
 	\begin{equation}
 		\mathbb{E}[D(h(X),Y)|e']\geq \mathbb{E}[D(h^*(X),Y)|e]
 	\end{equation}
 	For each $e\in\mathcal{E}$ with density $([\Psi_S,\Psi_V],Y)\mapsto P(\Psi_S,\Psi_V,Y)$, we construct environment $e'$ with density $Q(\Psi_S,\Psi_V,Y)$ that satisfies: (omit the superscript $*$ of $\Psi_S$ and $\Psi_V$ for simplicity)
 	\begin{equation}
 		Q(\Psi_S,\Psi_V,Y)=P(\Psi_S,Y)Q(\Psi_V)
 	\end{equation}
 	Note that such environment $e'$ exists because of the heterogeneity  property assumed in Assumption \ref{assumption:heterogeneity}.
 	Then we have:
 	\begin{align}
 		&\int D(h(\psi_s,\psi_v),y)q(\psi_s,\psi_v,y)d\psi_s d\psi_v dy\\
 		& = \int_{\psi_v} \int_{\psi_s,y} D(h(\psi_s,\psi_v),y)p(\psi_s,y)q(\psi_v)d\psi_s dy d\psi_v \\
 		& = \int_{\psi_v} \int_{\psi_s,y}D(h(\psi_s,\psi_v),y)p(\psi_s,y)d\psi_s dy q(\psi_v)d\psi_v\\
 		& \geq \int_{\psi_v} \int_{\psi_s,y} D(h^*(\psi_s,\psi_v),y)p(\psi_s,y)d\psi_s dy q(\psi_v)d\psi_v\\
 		&= \int_{\psi_v} \int_{\psi_s,y} D(h^*(\psi_s),y)p(\psi_s,y)d\psi_s dy q(\psi_v)d\psi_v\\
 		&= \int_{\psi_s,y} D(h^*(\psi_s),yp(\psi_s,y)d\psi_s dy\\
 		&= \int_{\psi_s,\psi_v,y} D(h^*(\psi_s),y)p(\psi_s,\psi_v,y)d\psi_s d\psi_v dy\\
 	\end{align}
 \end{proof}
 
 \subsubsection{Proof of Lemma 4.1}
 Firstly, we add the assumption in \cite{hrm}.
 \begin{assumption}
    \label{assumption:v}
    Assume the pooled training data is made up of heterogeneous data sources: $P_{tr} = \sum_{e \in \mathrm{supp}(\mathcal{E}_{tr})} w_e P^e$. For any $e_i, e_j \in \mathcal{E}_{tr}, e_i\neq e_j$, we assume
    \begin{small}
        \begin{equation}
        \label{equ:v_assumption}
            I^c_{i,j}(Y;\Phi^*|\Psi^*) \geq \mathrm{max}(I_i(Y;\Phi^*|\Psi^*),I_j(Y;\Phi^*|\Psi^*))
        \end{equation}
    \end{small}
    where $\Phi^*$ is invariant feature and $\Psi^*$ the variant. $I_i$ represents mutual information in $P^{e_i}$ and $I^c_{i,j}$ represents the cross mutual information between $P^{e_i}$ and $P^{e_j}$ takes the form of $I^c_{i,j}(Y;\Phi|\Psi) = H^c_{i,j}[Y|\Psi] - H^c_{i,j}[Y|\Phi,\Psi]$ and $H^c_{i,j}[Y] = -\int p^{e_i}(y)\log p^{e_j}(y) dy$.
\end{assumption}
Then the proof for Lemma 4.1 can be found in \cite{hrm}.

\subsubsection{Proof of Theorem 4.1}
Firstly, we transform the clustering objective in Equation 12, making it more suitable for further analysis. 
Proof can be found in \cite{convexcluster}.

\begin{theorem}
\label{thm:convert}
Let $\mathcal{Q}'$ be the \textbf{set} of distributions of the complete data random variable $(J,\Psi, Y) \in \left\{1,2,...,K\right\}\times \mathbb{R}^{d}\times\mathbb{R}$ with elements: 
\begin{equation} \label{eq:def_q}
    Q'(J=j,\Psi=\psi,Y=y)=q_j h_j(\psi,y),
\end{equation}
i.e. $Q'(j,\psi,y)$ is the probability of data point $(\psi,y)$ belonging to the $j$-th cluster. Let $\mathcal{P}'$ be the set of distributions on the same random variable $(J,\Psi, Y)$ which have $\hat{P}_N$ as their marginal on $(\Psi, Y)$. Specifically for any $P' \in \mathcal{P}'$ we have:
\begin{equation} \label{eq:def_p}
\begin{aligned}
    P'(j,\psi,y)&=\hat{P}_N(\psi,y)P'(j|\psi,y)\\&=\left\{
    \begin{aligned}
    \frac{1}{N}r_{ij}, & \ \text{if}\ (\phi,y) = (\phi_i, y_i)\\
    0, & \ \text{otherwise}
    \end{aligned}
    \right.
\end{aligned}
\end{equation}
where $r_{ij}=P'(j|\psi_i,y_i)$.
Then:
\begin{equation} \label{eq:new_objective}
    \min_{Q\in \mathcal{Q}} D_{\text{KL}}(\hat{P}_N||Q)=\min_{P'\in \mathcal{P}',Q'\in \mathcal{Q}'}D_{\text{KL}}(P'||Q').
\end{equation}
\end{theorem}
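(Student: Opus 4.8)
The plan is to prove the two inequalities separately, writing $\mathrm{L}:=\min_{Q\in\mathcal{Q}}D_{\text{KL}}(\hat P_N\|Q)$ and $\mathrm{R}:=\min_{P'\in\mathcal{P}',\,Q'\in\mathcal{Q}'}D_{\text{KL}}(P'\|Q')$. Both directions follow from a single tool: the chain rule for KL divergence under the split of the complete-data variable $(J,\Psi,Y)$ into its observed part $(\Psi,Y)$ and its cluster label $J$. Writing $\pi$ for the marginalization map $(j,\psi,y)\mapsto(\psi,y)$, for any joint $P'$ on $(J,\Psi,Y)$ with posterior $P'(j\mid\psi,y)$, and likewise for $Q'$,
\[
D_{\text{KL}}(P'\,\|\,Q') = D_{\text{KL}}(\pi P'\,\|\,\pi Q') + \mathbb{E}_{(\Psi,Y)\sim\pi P'}\big[D_{\text{KL}}(P'(\cdot\mid\Psi,Y)\,\|\,Q'(\cdot\mid\Psi,Y))\big].
\]
Two bookkeeping facts connect this to the statement: (i) marginalizing $Q'(j,\psi,y)=q_jh_j(\psi,y)$ over $j$ gives $\pi Q'=\sum_jq_jh_j$, so as $q\in\Delta_K$ and the centre parameters $\Theta$ vary, $\pi Q'$ sweeps out exactly $\mathcal{Q}$; (ii) every $P'\in\mathcal{P}'$ has $\pi P'=\hat P_N$ by definition of $\mathcal{P}'$. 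Since each $h_j$ is a full-support Gaussian, $\hat P_N$ is dominated by every $Q\in\mathcal{Q}$ in the density-evaluation sense in which the clustering objective of Section~\ref{sec:cluster} is written, so all the divergences above are well defined.

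For $\mathrm{R}\ge\mathrm{L}$: fix any $P'\in\mathcal{P}'$ and $Q'\in\mathcal{Q}'$ and put $Q:=\pi Q'\in\mathcal{Q}$; dropping the nonnegative conditional-divergence term in the chain rule yields $D_{\text{KL}}(P'\|Q')\ge D_{\text{KL}}(\hat P_N\|Q)\ge\mathrm{L}$, and taking the infimum over $(P',Q')$ gives $\mathrm{R}\ge\mathrm{L}$ — this is just the data-processing inequality for $\pi$. For $\mathrm{R}\le\mathrm{L}$: let $Q^{*}=\sum_jq_j^{*}h_j\in\mathcal{Q}$ attain $\mathrm{L}$; lift it to $\bar Q\in\mathcal{Q}'$ by $\bar Q(j,\psi,y):=q_j^{*}h_j(\psi,y)$, whose posterior is $\bar Q(j\mid\psi,y)=q_j^{*}h_j(\psi,y)/\sum_lq_l^{*}h_l(\psi,y)$; and define $\bar P\in\mathcal{P}'$ by giving it this same posterior, i.e. $r_{ij}=\bar Q(j\mid\psi_i,y_i)$ in the notation of \eqref{eq:def_p}. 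Then $\bar P$ is admissible ($\pi\bar P=\hat P_N$ and each row $(r_{ij})_j$ is a probability vector), the conditional-divergence term in the chain rule vanishes, and so $D_{\text{KL}}(\bar P\|\bar Q)=D_{\text{KL}}(\hat P_N\|Q^{*})=\mathrm{L}$; hence $\mathrm{R}\le\mathrm{L}$. The two inequalities give the equality, and the construction also shows the optimal assignments $r$ are the posterior of the optimal $Q^{*}$, matching the remark after Theorem~\ref{theorem:rate-distortion}.

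The one place I expect friction is not the argument but pinning down the objects: $\hat P_N$ is atomic whereas the $h_j$ are continuous densities, so "$D_{\text{KL}}(\hat P_N\|Q)$" must be read — consistently with the likelihood form $-\tfrac1N\sum_i\log[\sum_jq_jh_j(\psi_i,y_i)]$ used in Section~\ref{sec:cluster} — as cross-entropy minus the (constant) entropy of $\hat P_N$, and one must then check that the chain-rule identity holds verbatim under this convention. It does, since the decomposition is pointwise in $(\psi,y)$ and the sum over the finitely many labels $j$ raises no measure-theoretic issue; modulo this, everything is direct substitution. (If the minimum defining $\mathrm{L}$ is not attained, run the same computation along a minimizing sequence, losing only an arbitrarily small $\varepsilon$.)
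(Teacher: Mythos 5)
Your proof is correct. Note that the paper itself does not prove this statement at all --- it simply defers to the cited convex-clustering reference --- so your write-up actually supplies the missing argument, and it is the standard one: the chain-rule (equivalently, variational/EM) decomposition of the complete-data divergence into the marginal term $D_{\text{KL}}(\hat P_N\|\pi Q')$ plus a nonnegative posterior term, with the lower bound attained by setting $r_{ij}$ equal to the posterior $Q'(j\mid\psi_i,y_i)$ of the lifted optimal mixture. This matches how the paper subsequently uses the theorem (its appendix computation implicitly plugs in exactly this decomposition before optimizing $q_j=\frac{1}{N}\sum_i r_{ij}$), and your handling of the only delicate point --- that $\hat P_N$ is atomic while the $h_j$ are densities, so all KL quantities must be read in the cross-entropy/density-evaluation sense of the clustering objective, under which the pointwise chain rule over the finite label set is unaffected --- is the right way to make the statement rigorous; the full-support Gaussian assumption indeed guarantees the posteriors in your construction are well defined.
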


In the new optimization problem in Equation \ref{eq:new_objective}, we optimize $P'\in\mathcal{P}'$ and $Q'\in\mathcal{Q}'$. Specifically, in the former we can optimize $r_{ij}$, which is a discrete random variable over the space $\left\{1,2,...,N\right\} \times \left\{1,2,...,K\right\}$. Meanwhile, in the latter we can optimize $\left\{\Theta_j\right\}_{j=1}^K$ and $\left\{q_j\right\}_{j=1}^K$, which are the cluster centers and cluster weights, respectively.

Substituting the definitions of $P'$ and $Q'$ respective in Equation \ref{eq:def_p} and Equation \ref{eq:def_q} to Equation \ref{eq:new_objective}, we come the following equation:
\begin{equation}
    D_{\text{KL}}(P'||Q')=\frac{1}{N}\sum_{i=1}^N\sum_{j=1}^K r_{ij}[\log\frac{r_{ij}}{q_j}+\beta d(\psi_i,y_i,m_j)]+\text{Const},
\end{equation}
where $\beta=\frac{1}{2\sigma^2}$ is to better illustrate the Rate-Distortion theorem and $d(\psi_i,y_i,\Theta_j)=(f_{\Theta_j}(\psi_i)-y_i)^2$.

It is straightforward to show that for any set of values $r_{ij}$, setting $q_j=\frac{1}{N}\sum_{i=1}^N r_{ij}$ minimize the objective, therefore:
\begin{equation}
\begin{aligned}
    D_{\text{KL}}(P'||Q'^{*}(P'))=&\frac{1}{N}\sum_{i=1}^N\sum_{j=1}^K r_{ij}[\log\frac{r_{ij}}{\frac{1}{N}\sum_{i'=1}^N r_{i'j}}\\&+\beta d(\psi_i,y_i,m_j)]+\text{Const}\\
    =& \mathbb{I}(I;J)+\beta \mathbb{E}_{I,J}d(\psi_i,y_i,\Theta_j)+\text{Const},
\end{aligned}
\end{equation}
where $I,J$ are the marginal distribution of random variable $r_{ij}$ respectively.

The ﬁrst term is the mutual information between the random variables $I$ (data points) and $J$ (exemplars) under the empirical distribution and the second term is the expected value of the pairwise distances with the same distribution on indices.

Actually $d(\psi_i,y_i,\Theta_j)$ models the conditional distribution $P(Y|\Psi)$. If in the underlying distribution of the empirical data $P(Y|\Psi)$ differs a lot between different clusters, then $d(\psi_i,y_i,\Theta_j)$ will be focused more to be optimized because different clusters are more diverse so the optimizer will put more efforts in optimizing $d(\psi_i,y_i,\Theta_j)$. Resulting in smaller efforts put on optimization of $\mathbb{I}(I;J)$, resulting in a relatively larger $\mathbb{I}(I;J)$. This means data sample points $I$ has a larger mutual information with exemplars $J$, thus the clustering is more accurate.

We can provide another intuition of why larger $\mathbb{I}(I;J)$ means more accurate clustering. For a static dataset to be clustered, setting larger $\beta$ causes distance between points larger, resulting in more clusters which is more accurate. On the other hand, larger $\beta$ signifies the model puts more efforts to optimize $d(\psi_i,y_i,\Theta_j)$ and puts less efforts on the optimization of $\mathbb{I}(I;J)$, resulting in larger $\mathbb{I}(I;J)$.

\subsubsection{Proof of Theorem 4.2}
Firstly, since $\Psi_V^{(t+1)}(x_i) \leftarrow U_iS -  \left<U_iS,\theta_{inv}^{(t)}\right>\theta_{inv}^{(t)}/\|\theta_{inv}^{(t)}\|^2$, we have 
\begin{equation}
	\small \left<\Psi_V^{(t+1)},\theta_{inv}^{(t)}\right>=0
\end{equation}
Therefore, we have 
\begin{equation}
	\small \mathrm{Span}(\Psi_V^{(t+1)}) \perp \theta_{inv}^{(t)}
\end{equation}
and 
\begin{equation}
	\small \mathrm{Span}(\Psi_V^{(t+1)}) \subseteq \mathrm{Ker}(\theta_{inv}^{(t)})
\end{equation}
As for the clustering parameters $\Theta$, since the kernel regression is equivalent to linear regression using mapping function $\Psi_V$, we can directly derive the analytical solution of $\Theta_j$ as:
\begin{equation}
	\Theta_j(j\in[K])=((\Psi_V^j)^T\Psi_V^j)^{-1}(\Psi_V^j)^TY^j
\end{equation}
where $\Psi^j_V$ denotes the data matrix of environment $j$ and $Y^j$ the corresponding label matrix.
Then since
\begin{equation}
	((\Psi_V^j)^T\Psi_V^j)^{-1}(\Psi_V^j)^TY^j = (\Psi_V^j)^T((\Psi_V^j)(\Psi_V^j)^T)^{-1}Y^j
\end{equation}
we have
\begin{align}
	\Theta_j^T\theta_{inv} &=  \left[(\Psi_V^j)^T((\Psi_V^j)(\Psi_V^j)^T)^{-1}Y^j\right]^T\theta_{inv}\\
	& = 0
\end{align}
which gives the conclusion.

\subsection{Limitations and Future Work}
This work focus on the integration of latent heterogeneity exploitation and invariant learning on representation level.
To fulfill the mutual promotion between environment inference and invariant learning, we give up deep learning for representation learning, since the representation space in deep learning is hard to theoretically analyzed, which makes it quite hard to maintain the property we need.
As an alternative, we leverage Neural Tangent Kernel(NTK) and convert data into Neural Tangent Feature(NTF) space, for NTK theory\cite{ntk} builds the equivalency between MLP and kernel regression.

However, we have to admit that using NTF space for representation space is not as powerful as the representation space produced by recent deep learning methods.
But we would like to emphasize the difficulty in incorporating deep learning, since we cannot directly use the learned representation for heterogeneity exploitation, because during the invariant representation learning process, deep models will gradually extract the latent invariant components $\Psi_S^*$ in data and discard those variant components $\Psi_V^*$.
We have to resort to variant components $\Psi_V^*$ rather than invariant ones $\Psi_S^*$ to explore the heterogeneity, but variant components are discarded during the training of deep models.
Therefore, incorporating deep learning while maintaining mutual promotion is quite hard and we  leave it for future work.

\subsection{Related Work}
There are mainly two branches of methods for OOD generalization problem, namely Distributionally Robust Optimization(DRO) methods\cite{f-dro, drokuhn, drosagawa,droSinha} and Invariant Learning methods\cite{irm, invariant-rationalization, creager2020environment, mip, hrm}.

To ensure the OOD generalization performances, DRO methods\cite{f-dro, drokuhn, drosagawa,droSinha} aim to optimize the worst-performance over a distribution set, which is usually characterized by $f$-divergence or Wasserstein distance.
However, in real scenarios, it is often necessary for the distributional set to be large to contain the potential testing distributions, which results in the over-pessimism problem because of the large distribution set\cite{frogner2019incorporating, does}.

Realizing the difficulty of solving OOD generalization problem without prior knowledge or structural assumptions, invariant learning methods assume the existence of causally invariant relationships and propose to explore them through multiple environments.
However, the effectiveness of such methods relies heavily on the quality of training environments.
Further, modern big data are frequently assembled by merging data from multiple sources without explicit source labels, which results in latent heterogeneity in pooled data and renders these invariant learning methods inapplicable.

Recently, there are methods\cite{creager2020environment, hrm} aiming at relaxing the need for multiple environments for invariant learning.
\cite{creager2020environment} directly infers the environments according to a given biased model first and then performs invariant learning.
But the two stages cannot be jointly optimized and the quality of inferred environments depends heavily on the pre-provided biased model.
Further, for complicated data, using invariant representation for environment inference is harmful, since the environment-specific features are gradually discarded, causing the extinction of latent heterogeneity and rendering data from different latent environments undistinguishable.
\cite{hrm} designs a mechanism where two interactive modules for environment inference and invariant learning respectively can promote each other.
However, it can only deal with scenarios where invariant and variant features are decomposed on raw feature level, and will break down when the decomposition can only be performed in representation space(e.g., image data).

\newpage
\bibliography{nips}
\bibliographystyle{plain}

\end{document}